\newcommand{\ignore}[1]{}
\newcolumntype{L}[1]{>{\raggedright\let\newline\\\arraybackslash\hspace{0pt}}m{#1}}
\newcolumntype{C}[1]{>{\centering\let\newline\\\arraybackslash\hspace{0pt}}m{#1}}
\newcolumntype{R}[1]{>{\raggedleft\let\newline\\\arraybackslash\hspace{0pt}}m{#1}}
\let\MYcaption\@makecaption
\let\@makecaption\MYcaption
\let\oldgls\gls
\let\oldglspl\glspl
\newcommand\fussy@ifnextchar[3]{%
	\let\reserved@d=#1%
	\def\reserved@a{#2}%
	\def\reserved@b{#3}%
	\futurelet\@let@token\fussy@ifnch}
\def\fussy@ifnch{%
	\ifx\@let@token\reserved@d
		\let\reserved@c\reserved@a
	\else
		\let\reserved@c\reserved@b
	\fi
	\reserved@c}
\renewcommand{\gls}[1]{%
\oldgls{#1}\fussy@ifnextchar.{\@checkperiod}{\@}}
\renewcommand{\glspl}[1]{%
\oldglspl{#1}\fussy@ifnextchar.{\@checkperiod}{\@}}
\newcommand{\@checkperiod}[1]{%
	\ifnum\sfcode`\.=\spacefactor\else#1\fi
}
\newacronym{wrt}{w.r.t.}{with respect to}
\newacronym{RHS}{R.H.S.}{right-hand side}
\newacronym{LHS}{L.H.S.}{left-hand side}
\newacronym{iid}{i.i.d.}{independent and identically distributed}
\let\saved@bibitem\@bibitem\makeatother
\let\@bibitem\saved@bibitem\makeatother
\crefname{equation}{}{}
\Crefname{equation}{}{}
\crefname{claim}{claim}{claims}
\crefname{step}{step}{steps}
\crefname{line}{line}{lines}
\crefname{condition}{condition}{conditions}
\crefname{dmath}{}{}
\crefname{dseries}{}{}
\crefname{dgroup}{}{}
\crefname{Problem}{Problem}{Problems}
\crefname{Theorem}{Theorem}{Theorems}
\crefname{Corollary}{Corollary}{Corollaries}
\crefname{Proposition}{Proposition}{Propositions}
\crefname{Lemma}{Lemma}{Lemmas}
\crefname{Definition}{Definition}{Definitions}
\crefname{Example}{Example}{Examples}
\crefname{Assumption}{Assumption}{Assumptions}
\crefname{Remark}{Remark}{Remarks}
\crefname{Rem}{Remark}{Remarks}
\crefname{remarks}{Remarks}{Remarks}
\crefname{Appendix}{Appendix}{Appendices}
\crefname{Supplement}{Supplement}{Supplements}
\crefname{Exercise}{Exercise}{Exercises}
\crefname{Theorem_A}{Theorem}{Theorems}
\crefname{Corollary_A}{Corollary}{Corollaries}
\crefname{Proposition_A}{Proposition}{Propositions}
\crefname{Lemma_A}{Lemma}{Lemmas}
\crefname{Definition_A}{Definition}{Definitions}
		\let\Cref\crtCref
		\let\cref\crtcref
\def\cleartheorem#1{%
    \expandafter\let\csname#1\endcsname\relax
    \expandafter\let\csname c@#1\endcsname\relax
}
\def\clearthms#1{ \@for\tname:=#1\do{\cleartheorem\tname} }
		\newtheorem{Theorem}{Theorem}
		\newtheorem{Corollary}{Corollary}
		\newtheorem{Proposition}{Proposition}
		\newtheorem{Lemma}{Lemma}
		\newtheorem{Theorem}{Theorem}
		\newtheorem{Proposition}[Theorem]{Proposition}
	\newtheorem{Definition}{Definition}
	\newtheorem{Remark_A}{Remark}[section]
\theoremstyle{remark}
\theoremstyle{plain}
\newcommand{\qednew}{\nobreak \ifvmode \relax \else
		\ifdim\lastskip<1.5em \hskip-\lastskip
			\hskip1.5em plus0em minus0.5em \fi \nobreak
		\vrule height0.75em width0.5em depth0.25em\fi}
\newcommand{\nn}{\nonumber\\ }
\NewDocumentCommand{\movedownsub}{e{^_}}{%
	\IfNoValueTF{#1}{%
		\IfNoValueF{#2}{^{}}
	}{%
		^{#1}
	}%
	\IfNoValueF{#2}{_{#2}}
}
\let\latexchi\chi
\RenewDocumentCommand{\chi}{}{\latexchi\movedownsub}
\newcommand{\Real}{\mathbb{R}}
\newcommand{\calE}{\mathcal{E}}
\newcommand{\calG}{\mathcal{G}}
\newcommand{\calH}{\mathcal{H}}
\newcommand{\calL}{\mathcal{L}}
\newcommand{\calV}{\mathcal{V}}
\newcommand{\bA}{\mathbf{A}}
\newcommand{\bB}{\mathbf{B}}
\newcommand{\bD}{\mathbf{D}}
\newcommand{\bE}{\mathbf{E}}
\newcommand{\bI}{\mathbf{I}}
\newcommand{\bQ}{\mathbf{Q}}
\newcommand{\bW}{\mathbf{W}}
\newcommand{\bz}{\mathbf{z}}
\newcommand{\bZ}{\mathbf{Z}}
\DeclareSymbolFont{bsfletters}{OT1}{cmss}{bx}{n}
\DeclareSymbolFont{ssfletters}{OT1}{cmss}{m}{n}
\DeclareMathSymbol{\bsfGamma}{0}{bsfletters}{'000}
\DeclareMathSymbol{\ssfGamma}{0}{ssfletters}{'000}
\DeclareMathSymbol{\bsfDelta}{0}{bsfletters}{'001}
\DeclareMathSymbol{\ssfDelta}{0}{ssfletters}{'001}
\DeclareMathSymbol{\bsfTheta}{0}{bsfletters}{'002}
\DeclareMathSymbol{\ssfTheta}{0}{ssfletters}{'002}
\DeclareMathSymbol{\bsfLambda}{0}{bsfletters}{'003}
\DeclareMathSymbol{\ssfLambda}{0}{ssfletters}{'003}
\DeclareMathSymbol{\bsfXi}{0}{bsfletters}{'004}
\DeclareMathSymbol{\ssfXi}{0}{ssfletters}{'004}
\DeclareMathSymbol{\bsfPi}{0}{bsfletters}{'005}
\DeclareMathSymbol{\ssfPi}{0}{ssfletters}{'005}
\DeclareMathSymbol{\bsfSigma}{0}{bsfletters}{'006}
\DeclareMathSymbol{\ssfSigma}{0}{ssfletters}{'006}
\DeclareMathSymbol{\bsfUpsilon}{0}{bsfletters}{'007}
\DeclareMathSymbol{\ssfUpsilon}{0}{ssfletters}{'007}
\DeclareMathSymbol{\bsfPhi}{0}{bsfletters}{'010}
\DeclareMathSymbol{\ssfPhi}{0}{ssfletters}{'010}
\DeclareMathSymbol{\bsfPsi}{0}{bsfletters}{'011}
\DeclareMathSymbol{\ssfPsi}{0}{ssfletters}{'011}
\DeclareMathSymbol{\bsfOmega}{0}{bsfletters}{'012}
\DeclareMathSymbol{\ssfOmega}{0}{ssfletters}{'012}
\newcommand{\bPsi}{\bm{\Psi}}
\newcommand*\rel@kern[1]{\kern#1\dimexpr\macc@kerna}
\newcommand*\widebar[1]{%
  \begingroup
  \def\mathaccent##1##2{%
    \rel@kern{0.8}%
    \overline{\rel@kern{-0.8}\macc@nucleus\rel@kern{0.2}}%
    \rel@kern{-0.2}%
  }%
  \macc@depth\@ne
  \let\math@bgroup\@empty \let\math@egroup\macc@set@skewchar
  \mathsurround\z@ \frozen@everymath{\mathgroup\macc@group\relax}%
  \macc@set@skewchar\relax
  \let\mathaccentV\macc@nested@a
  \macc@nested@a\relax111{#1}%
  \endgroup
}
\newcommand{\ifbcdot}[1]{\ifblank{#1}{\cdot}{#1}}
\DeclarePairedDelimiterX\abs[1]{\lvert}{\rvert}{\ifbcdot{#1}}
\DeclarePairedDelimiterX\parens[1]{(}{)}{\ifbcdot{#1}}
\DeclarePairedDelimiterX\brk[1]{[}{]}{\ifbcdot{#1}}
\DeclarePairedDelimiterX\braces[1]{\{}{\}}{\ifbcdot{#1}}
\DeclarePairedDelimiterX\angles[1]{\langle}{\rangle}{\ifblank{#1}{\cdot,\cdot}{#1}}
\DeclarePairedDelimiterX\ip[2]{\langle}{\rangle}{\ifbcdot{#1},\ifbcdot{#2}}
\DeclarePairedDelimiterX\norm[1]{\lVert}{\rVert}{\ifbcdot{#1}}
\DeclarePairedDelimiterX\ceil[1]{\lceil}{\rceil}{\ifbcdot{#1}}
\DeclarePairedDelimiterX\floor[1]{\lfloor}{\rfloor}{\ifbcdot{#1}}
\DeclareFontFamily{U}{matha}{\hyphenchar\font45}
\DeclareFontShape{U}{matha}{m}{n}{
      <5> <6> <7> <8> <9> <10> gen * matha
      <10.95> matha10 <12> <14.4> <17.28> <20.74> <24.88> matha12
      }{}
\DeclareSymbolFont{matha}{U}{matha}{m}{n}
\DeclareFontFamily{U}{mathx}{\hyphenchar\font45}
\DeclareFontShape{U}{mathx}{m}{n}{
      <5> <6> <7> <8> <9> <10>
      <10.95> <12> <14.4> <17.28> <20.74> <24.88>
      mathx10
      }{}
\DeclareSymbolFont{mathx}{U}{mathx}{m}{n}
\DeclareMathDelimiter{\vvvert}{0}{matha}{"7E}{mathx}{"17}
\DeclarePairedDelimiterX\vertiii[1]{\vvvert}{\vvvert}{\ifbcdot{#1}}
\DeclarePairedDelimiterXPP\trace[1]{\operatorname{Tr}}{(}{)}{}{\ifbcdot{#1}} 
\DeclarePairedDelimiterXPP\col[1]{\operatorname{col}}{\{}{\}}{}{\ifbcdot{#1}} 
\DeclarePairedDelimiterXPP\row[1]{\operatorname{row}}{\{}{\}}{}{\ifbcdot{#1}} 
\DeclarePairedDelimiterXPP\erf[1]{\operatorname{erf}}{(}{)}{}{\ifbcdot{#1}}
\DeclarePairedDelimiterXPP\erfc[1]{\operatorname{erfc}}{(}{)}{}{\ifbcdot{#1}}
\DeclarePairedDelimiterXPP\KLD[2]{D}{(}{)}{}{\ifbcdot{#1}\, \delimsize\|\, \ifbcdot{#2}} 
\DeclarePairedDelimiterXPP\op[2]{\operatorname{#1}}{(}{)}{}{#2} 
\newcommand{\T}{^{\intercal}}
\newcommand{\ud}{\,\mathrm{d}} 
\DeclarePairedDelimiterXPP\indicate[1]{{\bf 1}}{\{}{\}}{}{\ifbcdot{#1}}
\providecommand\given{}
\DeclarePairedDelimiterX\Set[2]\{\}{%
\renewcommand\given{\SetSymbol[\delimsize]{#1}}
#2
}
\DeclarePairedDelimiterX\Setc[1]\{\}{%
\renewcommand\given{\SetSymbol{:}}
#1
}
\NewDocumentCommand\set{s o m}{%
	\IfBooleanTF#1%
	{\IfValueTF{#2}{\Set*{#2}{#3}}{\Setc*{#3}}}%
	{\IfValueTF{#2}{\Set{#2}{#3}}{\Setc{#3}}}%
}
\NewDocumentCommand{\evalat}{ s O{\big} m e{_^} }{%
\IfBooleanTF{#1}%
{\left. #3 \right|}{#3#2|}%
\IfValueT{#4}{_{#4}}%
\IfValueT{#5}{^{#5}}%
}
\providecommand\given{}
\DeclarePairedDelimiterXPP\cprob[1]{}(){}{
\renewcommand\given{\nonscript\,\delimsize\vert\allowbreak\nonscript\,\mathopen{}}%
\DeclarePairedDelimiterXPP\cexp[1]{}[]{}{
\renewcommand\given{\nonscript\,\delimsize\vert\allowbreak\nonscript\,\mathopen{}}%
#1%
}
\DeclareDocumentCommand \P { s e{_^} d() g } {%
	\mathbb{P}%
	\IfBooleanTF{#1}%
		{
			\IfValueT{#2}{_{#2}}%
			\IfValueT{#3}{^{#3}}%
			\IfValueTF{#5}{\cprob{#4 \given #5}}{\IfValueT{#4}{\cprob{#4}}}%
		}%
		{
			\IfValueT{#2}{_{#2}}%
			\IfValueT{#3}{^{#3}}%
			\IfValueTF{#5}{\cprob*{#4 \given #5}}{\IfValueT{#4}{\cprob*{#4}}}%
		}%
}
\DeclareDocumentCommand \E { s e{_^} o g } {%
	\mathbb{E}%
	\IfBooleanTF{#1}%
		{
			\IfValueT{#2}{_{#2}}%
			\IfValueT{#3}{^{#3}}%
			\IfValueTF{#5}{\cexp{#4 \given #5}}{\IfValueT{#4}{\cexp{#4}}}%
		}%
		{
			\IfValueT{#2}{_{#2}}%
			\IfValueT{#3}{^{#3}}%
			\IfValueTF{#5}{\cexp*{#4 \given #5}}{\IfValueT{#4}{\cexp*{#4}}}%
		}%
}
\NewDocumentCommand \dist {m o o} {%
\mathrm{#1}\left(%
	\IfValueT{#3}{%
		\tl_if_blank:nTF{ #3 }{\cdot\, \middle|\, }{#3\, \middle|\, }%
	}
	\IfValueT{#2}{#2}%
\right)%
}
\NewDocumentCommand {\cbrace} {t+ D[]{black} D(){\widthof{#5}} m m } {%
	\begingroup%
		\color{#2}
		\IfBooleanTF{#1}{%
			\overbrace{#4}^%
		}{
			\underbrace{#4}_%
		}%
		{\parbox[c]{#3}{\centering\footnotesize{#5}}}%
	\endgroup%
}
\let\oldforall\forall
\renewcommand{\forall}{\oldforall \, }
\let\oldexist\exists
\renewcommand{\exists}{\oldexist \, }
\DeclareDocumentCommand{\includeCroppedPdf}{ o O{./Figures/} m }{
	\IfFileExists{#2#3-crop.pdf}{}{%
		\immediate\write18{pdfcrop #2#3.pdf #2#3-crop.pdf}}%
	\includegraphics[#1]{#2#3-crop.pdf}
}
\newcommand*{\addFileDependency}[1]{
  \typeout{(#1)}
  \@addtofilelist{#1}
  \IfFileExists{#1}{}{\typeout{No file #1.}}
}
\definecolor{gray90}{gray}{0.9}
	\newcommand{\red}[1]{{\color{red} #1}}
	\newcommand{\blue}[1]{{{\color{blue} #1}}}
	\newcommand{\msout}[1]{\text{\color{green} \sout{\ensuremath{#1}}}}
	\newcommand{\del}[1]{{\color{green}\ifmmode \msout{#1}\else\sout{#1}\fi}}
	\newcommand{\red}[1]{#1}
	\newcommand{\blue}[1]{#1}
	\newcommand{\msout}[1]{#1}
	\newcommand{\del}[1]{#1}
\newcommand{\hhide}[1]{}
	\def\@testdef #1#2#3{%
		\def\reserved@a{#3}\expandafter \ifx \csname #1@#2\endcsname
			\reserved@a  \else
			\typeout{^^Jlabel #2 changed:^^J%
				\meaning\reserved@a^^J%
				\expandafter\meaning\csname #1@#2\endcsname^^J}%
			\@tempswatrue \fi}
\title{On the Robustness of Graph Neural Diffusion to Topology Perturbations}
\author{%
 Yang~Song\thanks{Equal contribution.}\\
  Nanyang Technological University\\
     C3 AI\\
   \texttt{yang.song@c3.ai} \\
      \And
  Qiyu~Kang\footnotemark[1]\\
  Nanyang Technological University\\
   \texttt{kang0080@e.ntu.edu.sg} \\
   \And 
   Sijie~Wang\footnotemark[1]\\
  Nanyang Technological University\\
   \texttt{wang1679@e.ntu.edu.sg} \\
   \And
      Kai~Zhao\footnotemark[1]\\
  Nanyang Technological University\\
    \texttt{kai.zhao@ntu.edu.sg} \\
   \And
   Wee~Peng~Tay\\
  Nanyang Technological University\\
   \texttt{wptay@ntu.edu.sg} \\
}
\begin{document}

\maketitle

\begin{abstract}

Neural diffusion on graphs is a novel class of graph neural networks that has attracted increasing attention recently. The capability of graph neural partial differential equations (PDEs) in addressing common hurdles of graph neural networks (GNNs), such as the problems of over-smoothing and bottlenecks, has been investigated but not their robustness to adversarial attacks. In this work, we explore the robustness properties of graph neural PDEs. We empirically demonstrate that graph neural PDEs are intrinsically more robust against topology perturbation as compared to other GNNs. We provide insights into this phenomenon by exploiting the stability of the heat semigroup under graph topology perturbations. We discuss various graph diffusion operators and relate them to existing graph neural PDEs. Furthermore, we propose a general graph neural PDE framework based on which a new class of robust GNNs can be defined. We verify that the new model achieves comparable state-of-the-art performance on several benchmark datasets.
\end{abstract}

\section{Introduction}
\label{sect:intro}
Deep learning on graphs and Graph Neural Networks (GNNs), in particular, have achieved remarkable success in a variety of application areas such as those related to chemistry (molecules), finance (trading networks) and the social media (the Facebook friend network) \cite{yueBio2019,AshoorNC2020,kipf2017semi,ZhangTKDE2022,WuTNNLS2021}. GNNs have been applied to various tasks including node classification \cite{kipf2017semi}, link prediction \cite{kipf2016vgae}, and recommender systems \cite{YingKDD2018}. The key to the success of GNNs is the neural message passing scheme \cite{GilmerICML2017} where messages are propagated along edges and optimized towards a downstream task. 

While the aggregation of neighboring nodes' information is a powerful principle of representation learning, the way that GNNs exchange the information between nodes makes them vulnerable to adversarial attacks \cite{zugnerKDD2018}. Adversaries can perturb a graph's topology by adding or removing edges \cite{Chen2018FastGA,WaniekNHB2018,Du2017TopologyAG} or by injecting malicious nodes to the original graph \cite{Wang2020ScalableAO,speit_attack,zouKDD2021}.
Another common attack is to perturb node attributes \cite{zugnerKDD2018,zugner_adversarial_2019,maKDD2021,sunWWW2020}. 
Our paper will mainly tackle graph topology perturbation. 
Adversaries who can inject nodes to the original graph while not modifying the original graph directly are called injection attacks \cite{Wang2020ScalableAO,speit_attack,zouKDD2021}. Adversaries who can directly modify the original graph including edges and node features are called modification attacks \cite{Chen2018FastGA,WaniekNHB2018,Du2017TopologyAG,zugnerKDD2018,zugner_adversarial_2019,maKDD2021,sunWWW2020}.
 To defend against adversarial attacks, several robust GNN models have been proposed. Examples include RobustGCN \cite{zhuKDD2019}, GRAND \cite{FengNeurips2020}, and ProGNN \cite{JinKDD2020}. In addition, pre-processing based defenders include GNN-SVD \cite{EntezariWSDM2020} and GNNGuard \cite{zhangNeurips2020}. 

Recent studies \cite{yan2019robustness,haber2017stable,liu2020does,kang2021Neurips} have applied neural Ordinary Differential Equations (ODEs) \cite{chen2018neural} to defend against adversarial attacks. Some works like \cite{yan2019robustness,kang2021Neurips} have revealed interesting intrinsic properties of ODEs that make them more stable than conventional convolutional neural networks (CNNs). Neural Partial Differential Equations (PDEs) have also been applied to graph-structured data \cite{chamberlain2021grand, chamberlain2021blend}. Some papers like \cite{chamberlain2021grand} (GRAND) and \cite{chamberlain2021blend} (BLEND) approach deep learning on graphs as a continuous diffusion process and treat GNNs as spatial discretizations of an underlying PDE. However, robustness to adversarial attack has not been studied on such graph neural PDEs. In this work, we demonstrate that graph neural PDEs are intrinsically more robust against adversarial topological perturbations compared to other GNNs. We investigate the heat diffusion on a general Riemannian manifold and show that the diffusion process is essentially stable under small perturbations of the manifold metric. In doing so, we provide insights into the underlining reasons why graph neural PDEs are stable under graph topological perturbations. Such insights indicate that further improvements to the design of graph neural PDEs are possible. We develop several such improved models under a general graph neural PDE framework and show that these models are also robust to node attribute perturbations.

\textbf{Main contributions}. In this paper, our objective is to develop a general diffusion framework on graphs and study the robustness properties of the induced graph neural PDEs. Our main contributions are summarized as follows:
\begin{itemize}
    \item We review the notion of heat diffusion on Riemannian manifolds and the stability of its semigroup. We present analogous concepts of gradient, divergence and Laplacian operators for heat diffusion on a graph. 
    \item We generalize heat flow to more general flow schemes, including mean curvature flow and Beltrami flow, which are able to preserve inter-class edges in diffusion. A novel class of graph neural PDEs is thereby induced. 
    \item  We show that the proposed graph neural PDEs are intrinsically robust to graph topology perturbations. We verify that the new model achieves
comparable state-of-the-art performance on several benchmark datasets.
\end{itemize}

The rest of this paper is organized as follows. In \cref{sect:preliminaries}, we start with preliminaries on continuous diffusion over a Riemannian manifold and the discrete graph. In \cref{sect:neural_gode}, we present our main results on the stability properties of heat flow on graphs and generalize heat flow to edge-preserving flows from which a new class of graph neural PDEs is proposed. Our proposed model architecture is detailed in \cref{sect:model}. We summarize experimental results in \cref{sect:exper} and conclude the paper in \cref{sect:conc}. The proofs for all theoretical results in this paper are given in the supplementary material, where more experiments are also presented.

\section{Preliminaries}
\label{sect:preliminaries}
Similar to \cite{chamberlain2021grand, chamberlain2021blend}, we consider a graph as a discretization of a Riemannian manifold. We introduce concepts and notations for flows diffused over a general manifold and a discrete graph. Readers are referred to \cite{grigoryan2009heat} for more details. The stability of the heat kernel and heat semigroup of the heat diffusion equation under perturbations of the manifold metric is considered.  This discussion sheds light on the diffusion stability on graphs under perturbation of the graph topology in the next section.

\subsection{Heat Equation and Solution Stability on Manifold}\label{sec:heat_manifold}

A Riemannian manifold $(M,g)$ is a smooth manifold $M$ endowed with a Riemannian metric $g$, where the norms of tangent vectors and the angles between them are defined by an inner product. It is the natural generalization of the Euclidean space and correspondingly the divergence, gradient and Laplace operators are well-defined, analogous to their corresponding concepts in Euclidean space. Let $C_0^{\infty}(M)$ denote the space of smooth functions on $M$ with compact support and $\circ$ the composition operation.

\begin{Definition}\label{def:div_gradient_manifold}
The \emph{Laplace operator} $\Delta : C_0^{\infty}(M)\mapsto C_0^{\infty}(M)$ on a $d$-dimensional Riemannian manifold $(M, g)$ is defined as
\begin{align*}
   \Delta=\operatorname{div} \circ \nabla,
\end{align*}
where $\operatorname{div}$ is the \emph{divergence} defined for the $C_0^{\infty}$ vector fields on $M$ and  $\nabla$ is the \emph{gradient} operator.
In a local chart $U$ with coordinates $(x^1,x^2,\dots,x^n)$, for function $f$ and vector field $\psi$, we have
\begin{align}
    (\nabla f)^{i}=\sum_{j=1}^{d}g^{i j} \frac{\partial f}{\partial x^{j}}, \quad \operatorname{div} \psi=\sum_{j=1}^{d}\frac{1}{\sqrt{\operatorname{det} g}} \frac{\partial}{\partial x^{j}}\left(\sqrt{\operatorname{det} g} \psi^{j}\right), \label{eq:div_gradient_manifold}
\end{align}
where $\psi^{j}$ is the $j$-th component of $\psi$, $g^{ij}$ the components of the inverse metric of $g$, and $\operatorname{det}$ is the determinant operator of a matrix. We also have
\begin{align}
    \Delta=\sum_{i,j}^{d}\frac{1}{\sqrt{\operatorname{det} g}} \frac{\partial}{\partial x^{i}}\left(\sqrt{\operatorname{det}g}  g^{i j} \frac{\partial}{\partial x^{j}}\right).
\end{align}
\end{Definition}

 The classical Cauchy problem associated with the heat diffusion equation is to find a function $\varphi(t, x) \in C_0^{\infty}\left(\mathbb{R}^{+} \times M\right)$ such that
\begin{align}
\left\{\begin{array}{l}
\frac{\partial \varphi}{\partial t}=\Delta \varphi,\ t>0, \\
\left.\varphi\right|_{t=0}=f,
\end{array}\right. \label{eq:heat_cauchy}
\end{align}
where we only consider $f\in C_0^{\infty}$ and $\mathbb{R}^{+}$ is the space of positive real numbers. 

We can extend the Laplace operator to the generalized \emph{Dirichlet Laplace operator}, denoted as $\calL=-\Delta$ and defined on a larger domain\footnote{$\calL$ is defined on the larger Sobolev space $W^2_0(M)$ \cite{{grigoryan2009heat}}, and is the (negative) extension of $\Delta$. We have $\calL=-\Delta$ on $C^{\infty}_0$.} \cite[section 4.2]{grigoryan2009heat}. This operator is self-adjoint and non-negative definite on $L^2(M)$, and it can be shown that the above problem \cref{eq:heat_cauchy} is solved by means of the following family $\{P_{t}\}_{t \geq 0}$ of operators:
\begin{align}
P_{t} := e^{-t \calL}=\int_{\mathrm{spec}\calL} e^{-t \lambda} \ud E_{\lambda}
=\int_{0}^{\infty} e^{-t \lambda} \ud E_{\lambda},
\end{align}
where $\left\{E_{\lambda}\right\}_{\lambda\in\mathrm{spec}\calL}$ in $L^2(M)$ is the unique \emph{spectral resolution} of the {Dirichlet Laplace operator} $\calL$ and $\mathrm{spec}\calL$ is the \emph{spectrum set} of $\calL$. The family $\left\{P_{t}\right\}_{t \geq 0}$ is called the \emph{heat semigroup} associated with $\Delta$, and the solution is given by $P_tf$. It is also well-known that the solution of the above problem has an integration form via the (minimal) heat kernel function $k_t(x,y):\mathbb{R}^{+} \times M \times M \mapsto \mathbb{R}$ \cite[Theorem 7.13]{grigoryan2009heat}:
\begin{align*}
    P_{t} f(x)=\int_{M} k_{t}(x, y) f(y) \ud\mu(y),
\end{align*}
where  $\ud\mu$ is the Riemannian measure \cite[Section 3.4]{grigoryan2009heat} on $M$.

For a Riemannian manifold $M$, different metrics $g$ give rise to different structures. Let $\widetilde{\Delta}$ be the Laplacian associated with another metric $\tilde{g}$ on $M$ such that, for some $\alpha \geq 1$,
\begin{align*}
   \alpha^{-1} \tilde{g} \leq g \leq \alpha \tilde{g}.
\end{align*}
The new  $\widetilde{\Delta}$ is said to be \emph{quasi-isometric} \cite{saloff1992uniformly} to $g$, and can be viewed as a \emph{perturbation} of the metric $g$. $\widetilde{\Delta}$ can be also viewed as a uniformly elliptic operator \cite{saloff1992uniformly} \gls{wrt} to $g$. The stability of the heat semigroup and the heat kernel under perturbations of the Laplace operator (i.e., changes of $g$) is well studied in
\cite{chen1998stability}. In the special case of uniformly elliptic operators on Euclidean manifold $\Real^d$,  
the uniformly elliptic operator can be writen as $\operatorname{div} (A \nabla)$ with symmetric measurable coefficients $A(x)=\left(a_{i j}(x)\right)$ such that $\sum_{i,j}a_{i j}(x) \xi_{i} \xi_{j} \geq \lambda_{0}|\xi|^{2}$
for all $\xi \in \Real^{d}$ and for some constant $\lambda>0$ independent of $x$.
The reference \cite{chen1998stability} shows the following theorem:

\begin{Theorem}\label{thm:bound1}
On the Euclidean manifold $\Real^n$, let $\{P_{t}\}_{t \ge 0}$ and $\{\widetilde{P}_{t}\}_{t \ge 0}$ be two diffusion semigroups on $\mathbb{R}^{d}$ $(d \ge 2)$ associated with uniformly elliptic operators $\operatorname{div} (A \nabla)$ and $\operatorname{div}(\tilde{A} \nabla)$ with symmetric measurable coefficients $A(x)=\left(a_{i j}(x)\right)$ and $\tilde{A}(x)=\left(\tilde{a}_{i j}(x)\right)$, respectively. The corresponding heat kernels are denoted by $p_{t}(x, y)$ and $\tilde{p}_{t}(x, y)$.
We then have that 
\begin{enumerate}[i.]
\item There is a bounded, piecewise continuous function $F_1(t, z)$ on $\mathbb{R}^{+}\times\mathbb{R}^{+}$ with $\lim _{z \rightarrow 0} F_1(t, z)=0$ for each $t>0$ and a constant $c>0$, both of which depend only on $d$ and $\alpha$, such that
\begin{align*}
\left|p_{t}(x, y)-\tilde{p}_{t}(x, y)\right| \le t^{-d / 2} \exp \left(-\frac{\|x-y\|_2^{2}}{c t}\right) F_1\left(t,\|A-\tilde{A}\|_{L_{loc}^{2}}\right),
\end{align*}
for any $(t, x, y) \in \mathbb{R}^{+} \times \mathbb{R}^d \times \mathbb{R}^{d}$, $\|\cdot\|_2$ is the Euclidean norm, and $\|\cdot\|_{L_{loc}^{2}}$ is the local $L^2$-norm distance defined in \cite{chen1998stability}.
\item Furthermore, we also have an $L^p$-operator norm bound for $P_{t}-\widetilde{P}_{t}$ in terms of the local $L^{2}$-norm distance between $a_{i j}$ and $\tilde{a}_{i j}$:  There is a bounded, piecewise continuous function $F_{2}(t, z)$ on $\mathbb{R}^{+}  \times\mathbb{R}^{+} $ with $\lim _{z \rightarrow 0} F_{2}(t, z)=0$ for each $t>0$ that depends only on $d$ and  $\alpha$, such that for any $p \in[1, \infty]$, we have
\begin{align*}
   \left\|P_{t}-\tilde{P}_{t}\right\|_{p} \le F_{2}\left(t,\|A-\tilde{A}\|_{L_{l o c}^{2}}\right) .
\end{align*}
\end{enumerate}

\end{Theorem}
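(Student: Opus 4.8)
The plan is to establish both bounds by a perturbative (Duhamel) expansion of the difference of the two semigroups --- exactly the route of \cite{chen1998stability}, of which this statement is the $\Real^d$ specialization --- with each term controlled by the classical Gaussian bounds that uniform ellipticity already guarantees for $p_t$ and $\tilde p_t$ separately.

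First I would record the two consequences of uniform ellipticity that do not see the perturbation. By De~Giorgi--Nash--Moser theory, each of $p_t,\tilde p_t$ satisfies two-sided Aronson bounds $c_1 t^{-d/2}\exp(-C_1\|x-y\|_2^2/t)\le p_t(x,y)\le c_2 t^{-d/2}\exp(-\|x-y\|_2^2/(C_2 t))$ with constants depending only on $d$ and $\alpha$; and the associated Dirichlet form yields a Davies/Caccioppoli-type caloric energy estimate controlling $\int_0^t\!\!\int_{\Real^d}|\nabla_z p_s(x,z)|^2 e^{\psi(z)}\ud z\,\ud s$ against a Gaussian weight $e^{\psi}$. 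Since the coefficients are merely measurable there is no pointwise gradient bound on the kernels, so all gradient information must be kept in this integrated $L^2$ form --- which is why the hypothesis is phrased through the local $L^2$-norm of $A-\tilde A$.

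Next I would write the Duhamel identity for the difference of semigroups,
\[
P_t-\widetilde P_t=\int_0^t P_{t-s}\,(\widetilde{\calL}-\calL)\,\widetilde P_s\,\ud s,
\]
and, using $\calL=-\operatorname{div}(A\nabla)$, integrate by parts to move one derivative onto the left factor, obtaining the kernel representation
\[
p_t(x,y)-\tilde p_t(x,y)=\int_0^t\!\!\int_{\Real^d}\nabla_z p_{t-s}(x,z)\cdot\bigl(A(z)-\tilde A(z)\bigr)\nabla_z\tilde p_s(z,y)\,\ud z\,\ud s .
\]
To bound the right-hand side I would decompose the $z$-integral over unit cubes (matching the definition of the local $L^2$-norm), apply Cauchy--Schwarz on each cube to pull out $\|A-\tilde A\|_{L^2}$ there, and estimate the two remaining gradient factors with the caloric energy bound above; summing the pieces against their Gaussian weights reproduces the prefactor $t^{-d/2}\exp(-\|x-y\|_2^2/(ct))$ times a quantity controlled by $\|A-\tilde A\|_{L^2_{loc}}$, which one names $F_1(t,\cdot)$. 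The delicate point --- and the step I expect to be the main obstacle --- is that a single Duhamel step is linear in $\|A-\tilde A\|$ and hence useless when the perturbation is large; to obtain an $F_1$ that is \emph{bounded} on all of $\Real^+\times\Real^+$ yet still satisfies $F_1(t,z)\to0$ as $z\to0$, one must iterate the expansion and resum the resulting Neumann series (or truncate, replacing $\tilde A$ by $A$ on the set where they differ too much and absorbing the error via the Gaussian bounds), and then verify boundedness, piecewise continuity, and the $z\to0$ limit uniformly for $t$ in compact sets.

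Finally, for part (ii) I would deduce the operator-norm bound from part (i) by duality and interpolation: integrating the kernel bound of (i) in $y$ (respectively in $x$) and substituting $u=(x-y)/\sqrt t$ turns the Gaussian into an absolute constant and gives $\|P_t-\widetilde P_t\|_{1\to1}$ and $\|P_t-\widetilde P_t\|_{\infty\to\infty}$ bounded by $c'F_1(t,\|A-\tilde A\|_{L^2_{loc}})$ up to a bounded $t$-factor; Riesz--Thorin interpolation then gives the same bound for every $p\in[1,\infty]$, with $F_2(t,z)\to0$ as $z\to0$ inherited from $F_1$. Alternatively one can run the Duhamel estimate directly at the operator level on $L^p$, using $L^p$-contractivity of $P_{t-s}$ and $\widetilde P_s$ together with an $L^p$ caloric energy estimate, which sidesteps the pointwise kernel bound altogether.
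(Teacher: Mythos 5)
This theorem is not proved in the paper at all: it is imported verbatim from \cite{chen1998stability} (``The reference \cite{chen1998stability} shows the following theorem''), and the appendix's proof section covers only Propositions~\ref{lem:linear_graph} and~\ref{lem:lyap_stable} and the time-variant lemma. So there is no in-paper argument to compare against; what you have written is, in outline, the strategy of the cited source itself, and it is sound as a sketch. Two small remarks. First, after integrating by parts in the Duhamel identity the correct bilinear representation carries a minus sign, $p_{t}(x,y)-\tilde p_{t}(x,y)=-\int_0^t\int_{\Real^d}\nabla_z p_{t-s}(x,z)\cdot(A-\tilde A)(z)\nabla_z\tilde p_s(z,y)\,\ud z\,\ud s$, which is of course harmless for the absolute-value bound but worth recording. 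Second, the ``delicate point'' you flag --- making $F_1$ bounded for large perturbations --- does not require resumming a Neumann series: since both kernels separately obey the Aronson upper bound with constants depending only on $d$ and $\alpha$, the triangle inequality gives $|p_t-\tilde p_t|\le 2c_2\,t^{-d/2}\exp(-\|x-y\|_2^2/(C_2t))$ unconditionally, and one simply defines $F_1(t,z)$ as the minimum of this trivial bound and the linear-in-$z$ Duhamel bound; boundedness and the $z\to0$ limit are then immediate, and only piecewise continuity needs checking. Your deduction of part (ii) from part (i) by integrating the kernel estimate in $y$ (and in $x$, using symmetry of the kernels) followed by Riesz--Thorin is the standard route and is correct.
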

The above theorem shows the pointwise convergence of the heat kernel under perturbations of the matrix $A$. It proves the stability of the solution under small perturbations of the Laplace operator. These results can also be extended to general Riemannian manifolds \cite{sun2009concise} since every Riemannian manifold can be isometrically embedded into some Euclidean space \cite{nash1956imbedding} and the kernel is isometrically invariant \cite[theorem 9.12]{grigoryan2009heat}. We therefore know that if the difference between $\tilde{A}$ and $A$ (or more generally $\tilde{g}$ and $g$) are small, the final solutions to \cref{eq:heat_cauchy} have small difference. An intuitive explanation of the stability of the solution comes from the fact that the heat kernel $k_t( x, y)$ is the transition density function of a Brownian motion on the manifold \cite{hsu2002stochastic} since Brownian motion can be viewed as a diffusion generated by half of the Laplace operator. This means that $k_{t}(x, y)$ is a weighted average over all possible paths between $x$ and $y$ at time $t$, which does not change dramatically under small perturbations of $g$.  More specifically, for a Brownian motion on a domain $M$, if we perturb the $g$ over a subset $P \subset M$ then only the paths passing through $P$ will be affected.

If we go one step further from the uniformly elliptic operators $\operatorname{div}(A \nabla)$ defined in \cref{thm:bound1} by extending $A(x)\varphi$ to $A(x,t,\varphi,\nabla\varphi)$, we get the following  general \emph{quasilinear parabolic equation}:
\begin{align}
    \left\{\begin{array}{l}
\frac{\partial \varphi}{\partial t}=\operatorname{div} (A(x,t,\varphi, \nabla \varphi)),\ t>0 \\
\left.\varphi\right|_{t=0}=f,
\end{array}\right. \label{eq:heat_cauchy2}
\end{align}
where the principle part $A(x,t,\varphi, \nabla \varphi)$ is equipped with additional structure conditions as shown in \cite{dibenedetto2011harnack}. When $A(x,t,\varphi, \nabla \varphi)= A(x,t)$ is a time-dependent diffusion process, analogous bounds to \cref{thm:bound1} are shown in \cite{zheng1999stability}. We leave the analysis of general $A(x,t,\varphi, \nabla \varphi)$ for future work.

\subsection{Gradient and Divergence Operators on Graphs}\label{sec:grad_graph}
We consider a graph as a discretization of a Riemannian manifold. More specifically, consider an undirected graph $\calG = (\calV, \calE)$ consisting of a finite set $\calV$ of vertices, together with a subset $\calE \subset \calV \times \calV$ of edges. 
A graph is weighted when it is associated with a function $w : \calE \mapsto \Real^+$ which is symmetric, i.e., $w([u, v]) = w([v, u])$, for all $[u, v] \in \calE$.

Let $\calH(\calV)$ denote the Hilbert space on $\calV$ of real-valued functions with the inner product defined as $\langle a, b\rangle_{\calH(\calV)}=\sum_{v\in \calV}a(v)b(v)$, for all $a, b \in \calH(\calV)$. Similarly, we define a Hilbert space $\calH(\calE)$ with inner product $\langle c, d\rangle_{\calH(\calE)}=\sum_{[u,v]\in \calE}c([u,v])d([u,v])$, for all $c, d \in \calH(\calE)$.
We next define the gradient and divergence operators on graphs analogue to the general continuous manifold defined in \cref{def:div_gradient_manifold} as follows \cite{zhou2005regularization}.
\begin{Definition} Given an undirected graph $\calG = (\calV, \calE)$, we define the following:
\begin{enumerate}[i.]
    \item The graph gradient is an operator $\nabla: \calH(\calV)\mapsto\calH(\calE)$ defined by
\begin{align}\label{eqn:grad}
    \nabla \varphi([u,v])=\sqrt{\frac{w([u,v])}{h(v)}}\varphi(v) - \sqrt{\frac{w([u,v])}{h(u)}}\varphi(u), \forall [u,v]\in \calE,
\end{align}
where $h(v)=\sum_{[u,v]\in\calE}w([u,v])$  is the degree of node $v$.
\item The graph divergence is an operator ${\rm div}: \calH(\calE)\mapsto\calH(\calV)$ defined by 
\begin{align}\label{eqn:div}
    ({\rm div}\ \psi)(v)=\sum_{[u,v]\in\calE}\sqrt{\frac{w([u,v])}{h(v)}}\left(\psi([v,u]) - \psi([u,v])\right).
\end{align}
\item The graph Laplacian is an operator $\Delta: \calH(\calV)\mapsto\calH(\calV)$ defined by
\begin{align}\label{eqn:lap}
    \Delta\varphi=-\frac{1}{2}{\rm div}(\nabla\varphi). 
\end{align}
\end{enumerate}
\end{Definition}
The graph Laplacian defined above is identical to the normalized Laplacian matrix, i.e.,
\begin{align}
\Delta = \bD^{-1/2}(\bD - \bW)\bD^{-1/2},\label{eq:graphLaplacian}
\end{align}
 where $\bD$ is a diagonal matrix with $\bD(v, v) = h(v)$, and $\bW$ is the adjacency matrix satisfying $\bW(u, v) = w([u, v])$ if $[u, v]\in\calE$ and $\bW(u, v) = 0$ otherwise. Note that in \cref{eqn:lap} we have included the negative sign. The analogue of the graph Laplacian $\Delta$ in \cref{sec:heat_manifold} is the Dirichlet Laplace operator $\calL$. Note the use of the same notation $\Delta$ for graphs. The manifold Laplace operator $\Delta$ and graph Laplacian $\Delta$ will be apparent from the context.

\section{Neural Diffusion and Stability on Graphs}\label{sect:neural_gode}
We now consider neural diffusion on graphs by making use of concepts from \cref{sect:preliminaries}. Various parabolic-type equations on graphs are studied in this section. The solution stability against adversarial attacks is linked to the stability of the solution for the heat diffusion equation under perturbation of the manifold metric, introduced in \cref{sec:heat_manifold}. The general framework we consider is based on parabolic-type equations on graphs:
\begin{align}
\frac{\partial \varphi(u,t)}{\partial t}=\operatorname{div} (A(u,t,\varphi, \nabla \varphi)),\ t>0 \label{eq:heat_cauchy_graph}
\end{align}
with $\varphi(u,0)$ being the initial node attribute at node $u$ and where $A(u,t,\varphi, \nabla \varphi)$ can take different forms. We next provide some examples.

\subsection{Continuous Diffusion on Graphs}\label{ssect:diff_examples}
 \begin{Definition}[Heat diffusion]
The heat diffusion on graphs is defined by
\begin{align}
    \frac{\partial \varphi(u,t)}{\partial t}=\frac{1}{2}{\rm div}(\nabla \varphi)(u,t). \label{eq:heatgraph}
\end{align}
\end{Definition}
\begin{Definition}[GRAND/BLEND \cite{chamberlain2021grand,chamberlain2021blend}] \label{def:GRAND}
According to \cite[eq (1)]{chamberlain2021grand} and \cite[eq (7) and (9)]{chamberlain2021blend}, the GRAND/BLEND flow is defined by
\begin{align}\label{eqn:approx_curv}
    \frac{\partial \varphi(u,t)}{\partial t}=\frac{1}{2}{\rm div}(\nabla_{t} \varphi)(u,t),
\end{align}
where $\nabla_{t}$ is an adaptive Laplace operator (with possible graph rewiring) depending on the evolved node feature $\varphi(\cdot,t)$.
\end{Definition}
The gradient operator defined in GRAND/BLEND assumes constant edge weight. In \cref{def:GRAND}, the weight function $w([u,v])$ defined in \cref{eqn:grad} is incorporated into the gradient definition, so it can be absorbed in the time-dependent term $\nabla_{t} \varphi$. Note that BLEND degenerates to GRAND \cite{chamberlain2021grand} when there is no positional encoding. In this paper, for fair comparison, we do not use positional encoding for all GNNs.

Analogous to \cite{sochenTIP1998}, we can define mean curvature flow and Beltrami flow as follows.
\begin{Definition}
The mean curvature diffusion on graphs is defined by
\begin{align}\label{eqn:curv}
    \frac{\partial \varphi(u,t)}{\partial t}=\frac{1}{2}{\rm div}\left(\frac{\nabla \varphi}{\|\nabla \varphi\|}\right)(u,t),
\end{align}
where $-\dfrac{1}{2}{\rm div}\left(\frac{\nabla \varphi}{\|\nabla \varphi\|}\right)$ is a discrete analogue of the mean curvature operator, $\|\nabla \varphi\|=\langle \nabla \varphi, \nabla \varphi\rangle_{\calH(E)}^{1/2}$ and $\|\nabla \varphi(u,t)\|=\left(\sum_{[v,u]\in\calE} (\nabla \varphi([u,v],t))^2\right)^{1/2}$.
\end{Definition}

\begin{Definition}
The Beltrami diffusion on graphs is defined by
\begin{align}\label{eqn:belt}
    \frac{\partial \varphi(u,t)}{\partial t}=\frac{1}{2}\frac{1}{\|\nabla \varphi\|}{\rm div}\left(\frac{\nabla \varphi}{\|\nabla \varphi\|}\right)(u,t).
\end{align}
\end{Definition}

Intuitively, the term $\|\nabla \varphi(u,t)\|$, which appears in \cref{eqn:curv} and \cref{eqn:belt} but not in \cref{eq:heatgraph}, measures the smoothness of the signals in the neighborhood around vertex $u$ at time $t$.  The diffusion using \cref{eqn:curv} or \cref{eqn:belt} at vertex $u$ is small when $\|\nabla \varphi(u,t)\|$ is large, i.e., signals are less smooth around vertex $u$. Hence, mean curvature flow and Beltrami flow are able to preserve the non-smooth graph signals. This phenomenon is visualized in \cref{fig:tsne_flows}, where mean curvature flow and Beltrami flow are capable of preserving inter-class edges whose weights are much larger than those in heat flow.

\begin{figure}[!htb]
\centering
    \includegraphics[width=0.32\textwidth]{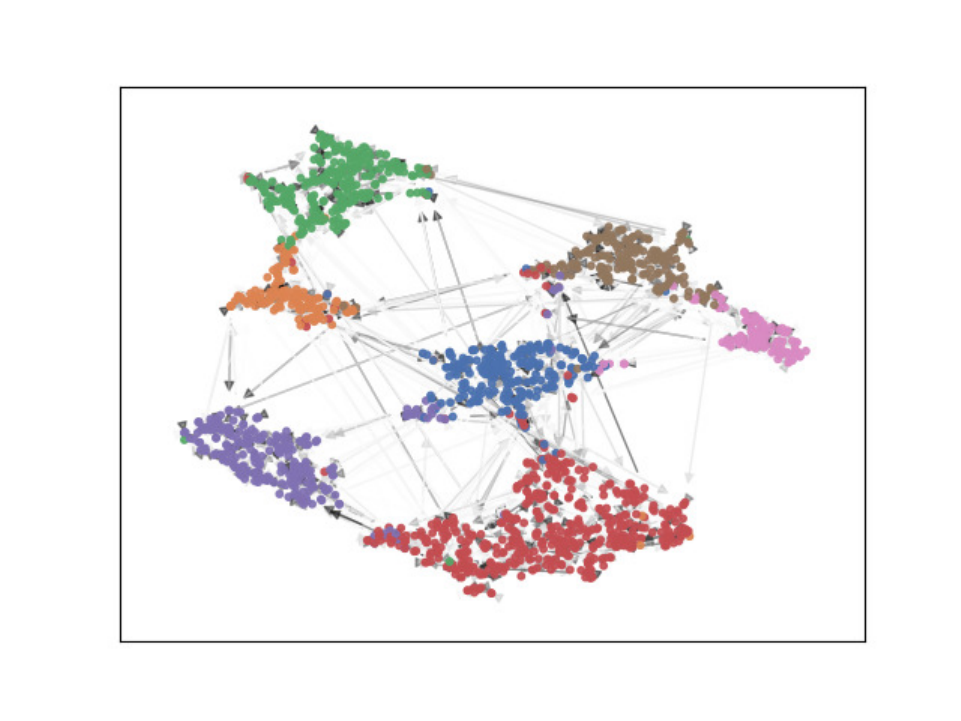}
    \includegraphics[width=0.32\textwidth]{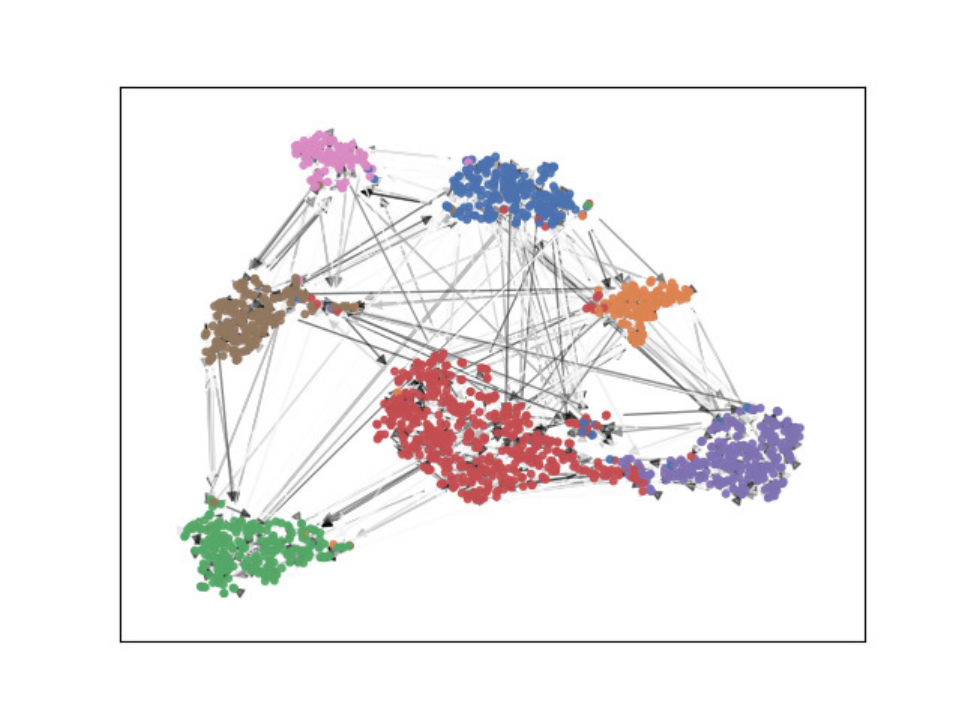}
    \includegraphics[width=0.32\textwidth]{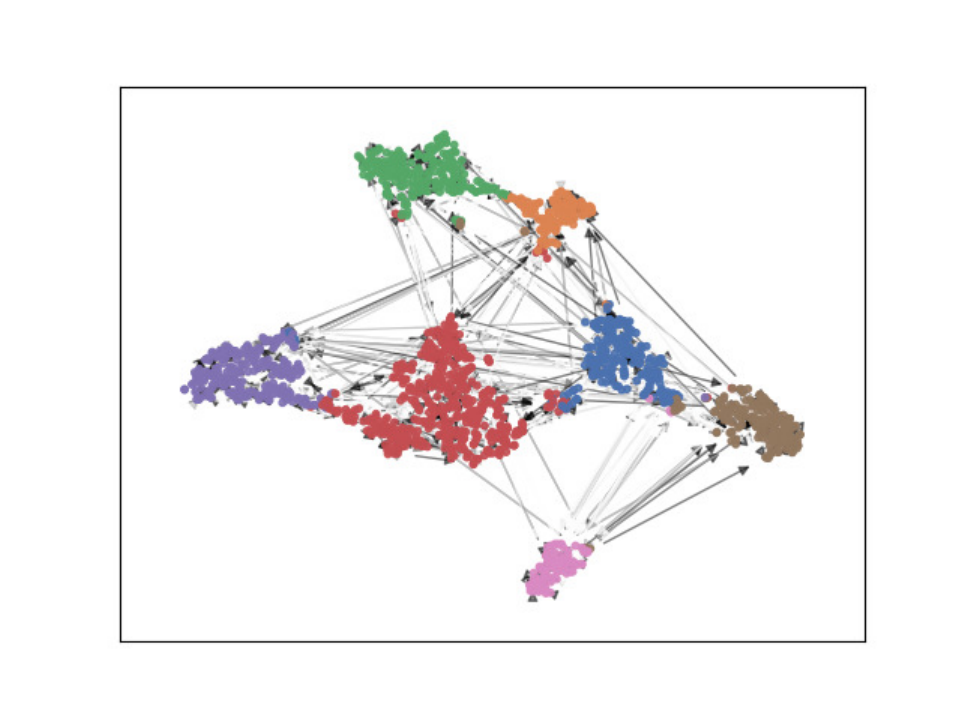}
    \vspace{-0.3cm}
  \caption{Effect of using different diffusion equations visualized using t-SNE with attention weights. Left: heat flow, middle: mean curvature flow, right: Beltrami flow. The darker the edge, the larger the attention weight.}
  \label{fig:tsne_flows}
\end{figure}
\subsection{Stability against Graph Topology Perturbation}
 Attackers can modify the original graph by adding or removing edges \cite{Chen2018FastGA,WaniekNHB2018,Du2017TopologyAG} and perturbing node attributes. Adding or removing edges leads to a different graph Laplacian $\Delta$ defined in \cref{eqn:lap}. Consequently, the solution to \cref{eq:heatgraph} is altered. Using the language of \cref{sec:heat_manifold}, the edge perturbations correspond to the perturbation of the metric $g$ (or equivalently $A$ in \cref{thm:bound1}). According to \cref{thm:bound1}, the stability of the heat semigroup and the solutions under small perturbations of the Laplace operator is guaranteed. Consequently, the solution is not affected significantly by the edge perturbations. 

The following result shows the stability of the solution of \cref{eq:heatgraph}, i.e., $ \frac{\partial \varphi(u,t)}{\partial t}= - \Delta \varphi(u,t)$, under graph topology or Laplacian perturbation. If the change in the graph topology with respect to any matrix norm is small, the semigroup perturbation can be bounded similarly as the result in \cref{thm:bound1}. This result considers only the case where the Laplace operator is time-invariant. The time-variant analogy of \cite{zheng1999stability} as discussed after \cref{eq:heat_cauchy2} is provided in the supplementary material. 

\begin{Proposition}\label{lem:linear_graph}
Consider $\Delta = \bD^{-1/2}(\bD - \bW)\bD^{-1/2}$ in \cref{eq:graphLaplacian} and suppose $\tilde{\Delta}=\tilde{\bD}^{-1/2}(\tilde{\bD} - \tilde{\bW})\tilde{\bD}^{-1/2}$ with $\tilde{\bD}$ being the diagonal degree matrix defined analogously to \cref{eq:graphLaplacian} for $\tilde{\bW}=\bW+ \bE$. Suppose $\varepsilon\coloneqq\vertiii{\bE}=o(1)$ for a matrix norm $\vertiii{}$, and $\bD$ and $\tilde{\bD}$ are non-singular. Then, $\|\varphi(u,t)-\tilde{\varphi}(u,t)\|= O(\varepsilon)$.
\end{Proposition}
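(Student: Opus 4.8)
The plan is to establish a discrete counterpart of the operator-norm estimate in \cref{thm:bound1}(ii). Since the heat flow \cref{eq:heatgraph} is linear, its solutions are the matrix exponentials $\varphi(\cdot,t) = e^{-t\Delta}\varphi(\cdot,0)$ and $\tilde{\varphi}(\cdot,t) = e^{-t\tilde{\Delta}}\varphi(\cdot,0)$ with the \emph{same} fixed initial datum $\varphi(\cdot,0)$, so it suffices to bound the operator norm of the semigroup difference $e^{-t\Delta} - e^{-t\tilde{\Delta}}$ by $O(\varepsilon)$. I would split this into two parts: (i) controlling the perturbation of the generator, $\norm{\Delta - \tilde{\Delta}}_{\mathrm{op}} = O(\varepsilon)$; and (ii) a Lipschitz-type bound for the semigroup in terms of its generator. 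Throughout I would use that $\calV$ is finite, so all matrix norms are equivalent and the generic norm $\vertiii{\cdot}$ may be replaced by the operator norm $\norm{\cdot}_{\mathrm{op}}$ at the cost of constants; I would also take $\bE$ symmetric, as an undirected topology perturbation is.

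For part (i), I would write $\Delta = \bI - \bD^{-1/2}\bW\bD^{-1/2}$ and likewise for $\tilde{\Delta}$, reducing the task to bounding $\bD^{-1/2}\bW\bD^{-1/2} - \tilde{\bD}^{-1/2}\tilde{\bW}\tilde{\bD}^{-1/2}$. The degree perturbation is $\tilde{\bD} - \bD = \diag(\bE\bone)$, whose entries are $O(\varepsilon)$. Because $\bD$ and $\tilde{\bD}$ are non-singular and $\calV$ is finite, the node degrees are bounded below by some $d_{\min} > 0$, and $\tilde{d}_v \ge d_{\min}/2$ once $\varepsilon$ is small; since $x \mapsto x^{-1/2}$ is Lipschitz on $[d_{\min}/2,\infty)$, this yields $\norm{\tilde{\bD}^{-1/2} - \bD^{-1/2}}_{\mathrm{op}} = O(\varepsilon)$. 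Substituting $\tilde{\bD}^{-1/2} = \bD^{-1/2} + O(\varepsilon)$ and $\tilde{\bW} = \bW + \bE$ and expanding the triple product, the first-order terms are $O(\varepsilon)$ (using that $\norm{\bD^{-1/2}}_{\mathrm{op}}$ and $\norm{\bW}_{\mathrm{op}}$ are fixed) and the remainder is $O(\varepsilon^2)$, so $\norm{\Delta - \tilde{\Delta}}_{\mathrm{op}} = O(\varepsilon)$.

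For part (ii), I would use the Duhamel identity
\begin{align*}
e^{-t\tilde{\Delta}} - e^{-t\Delta} = \int_0^t e^{-(t-s)\Delta}\,(\Delta - \tilde{\Delta})\, e^{-s\tilde{\Delta}}\ud s ,
\end{align*}
obtained by differentiating $s \mapsto e^{-(t-s)\Delta}e^{-s\tilde{\Delta}}$. The normalized Laplacian is symmetric and positive semi-definite with spectrum in $[0,2]$, so $\norm{e^{-(t-s)\Delta}}_{\mathrm{op}} \le 1$; by Weyl's inequality $\lambda_{\min}(\tilde{\Delta}) \ge -\norm{\Delta - \tilde{\Delta}}_{\mathrm{op}} = -O(\varepsilon)$, hence $\norm{e^{-s\tilde{\Delta}}}_{\mathrm{op}} \le e^{sO(\varepsilon)} = O(1)$ uniformly for $s\in[0,t]$ with $t$ fixed. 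Therefore $\norm{e^{-t\tilde{\Delta}} - e^{-t\Delta}}_{\mathrm{op}} \le t\, O(1)\, \norm{\Delta - \tilde{\Delta}}_{\mathrm{op}} = O(\varepsilon)$, and multiplying by the fixed initial datum gives $\norm{\varphi(u,t) - \tilde{\varphi}(u,t)} = O(\varepsilon)$, with an implied constant depending on $t$, the initial datum, the base graph, and the norm-equivalence constants. (Equivalently, one can bypass the semigroup estimates by telescoping the power series $e^{-t\Delta} - e^{-t\tilde{\Delta}} = \sum_{n\ge1}\tfrac{(-t)^n}{n!}(\Delta^n - \tilde{\Delta}^n)$ and using $\norm{\Delta^n - \tilde{\Delta}^n}_{\mathrm{op}} \le n M^{n-1}\norm{\Delta - \tilde{\Delta}}_{\mathrm{op}}$ with $M = \max(\norm{\Delta}_{\mathrm{op}},\norm{\tilde{\Delta}}_{\mathrm{op}})$.)

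I expect the main obstacle to be part (i), and within it the bound $\norm{\tilde{\bD}^{-1/2}-\bD^{-1/2}}_{\mathrm{op}} = O(\varepsilon)$: this is exactly where the non-singularity hypothesis is needed (to keep the node degrees bounded away from zero, so $x\mapsto x^{-1/2}$ stays Lipschitz on the range actually visited), and where the finiteness of $\calV$ matters (so the minimum degree is attained and positive, and the generic ``matrix norm'' can be traded for the operator norm). Once $\norm{\Delta-\tilde{\Delta}}_{\mathrm{op}} = O(\varepsilon)$ is in hand, part (ii) is routine and is precisely the finite-dimensional, discrete analogue of \cref{thm:bound1}(ii).
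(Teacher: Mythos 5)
Your proposal is correct and follows essentially the same route as the paper: both arguments first show $\vertiii{\Delta-\tilde{\Delta}}=O(\varepsilon)$ by expanding the triple product $\bD^{-1/2}\bW\bD^{-1/2}-\tilde{\bD}^{-1/2}\tilde{\bW}\tilde{\bD}^{-1/2}$ with $\bE'=\tilde{\bD}^{-1/2}-\bD^{-1/2}$ (your Lipschitz argument for $x\mapsto x^{-1/2}$ supplies the detail the paper only asserts), and then convert this to a semigroup bound that is linear in $\vertiii{\Delta-\tilde{\Delta}}$. The only difference is cosmetic: the paper cites Van Loan's sensitivity inequality for $\vertiii{e^{-t\Delta}-e^{-t\tilde{\Delta}}}$ while you rederive the same estimate from the Duhamel identity, and your reliance on positive semi-definiteness (spectrum in $[0,2]$ plus a Weyl bound for $\tilde{\Delta}$) is if anything the more accurate hypothesis, since a normalized graph Laplacian always has $0$ in its spectrum and the paper's appeal to strict positive definiteness is only needed for the sharper $O(\varepsilon t e^{-\rho t})$ decay, not for the stated $O(\varepsilon)$ conclusion.
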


\subsection{Neural Flows on Graphs}
Substituting \cref{eqn:grad} and \cref{eqn:div} into \cref{eqn:belt}, and ignoring the degree variable $h(u)$ for simplicity, we propose the neural Beltrami flow as
\begin{align}\label{eqn:neural_belt}
     \frac{\partial \varphi(u,t)}{\partial t}=\frac{1}{2}
     \frac{1}{\|\nabla \varphi(u)\|}
     \sum_{[v,u]\in\calE} w([u,v])
     \left(\frac{1}{\|\nabla \varphi(u)\|}+\frac{1}{\|\nabla \varphi(v)\|}\right)
     \left(\varphi(u)-\varphi(v)\right),
\end{align}
where $\|\nabla \varphi(u)\|=\sqrt{\sum_{[v,u]\in\calE} (\nabla \varphi([u,v]))^2}$. 
Here, we further assume $\varphi(u)$, for all nodes $u$ are time-independent for the sake of simplicity. 
Suppose $|\calV|=n$. The function $\varphi:\calV\mapsto\Real^d$ maps each node to a feature vector. Stacking all the feature vectors together, we obtain $\bZ\in\Real^{n\times d}$. Let the weight function $w([u,v])$ be the scaled dot product attention function \cite{velickovic2018graph} given by
\begin{align}\label{eqn:w}
    w([u,v]) = {\rm softmax}\left(\frac{(\bW_K\bz_u)\T(\bW_Q\bz_v)}{\sqrt{d_K}}\right),
\end{align}
where $\bW_K$ and $\bW_Q$ are the key and query learnable matrices, respectively, and $d_K$ denotes the number of rows $\bW_K$ has. If multi-head attention is applied and denote $\bA_h$ as the attention matrix associated with head $h$, then $\bA(\bZ)=\frac{1}{h}\sum_h \bA_h(\bZ)$ with $\bA_h(u,v)= w_h([u,v])$ where $w_h$ is the weight function for head $h$.  
The diffusion equation \cref{eqn:neural_belt} can be compactly written in matrix form as
\begin{align}\label{eqn:mtx_beltrami}
    \frac{\partial \bZ(t)}{\partial t}=\left(\bA(\bZ(t))\odot \bB(\bZ(t)) - \bPsi(\bZ(t))\right)\bZ(t),
\end{align}
where $\odot$ denotes element-wise multiplication, $\bB(u,v) = {\rm softmax}\left(\frac{1}{\|\nabla \varphi(u)\|^2}+\frac{1}{\|\nabla \varphi(u)\|\|\nabla \varphi(v)\|}\right)$ and $\bPsi(\bZ(t))$ is a diagonal matrix with $\Psi(u,u) = \sum_v \left(\bA\odot \bB\right)(u, v)$. 

Similarly, the diffusion equations using mean curvature flow \cref{eqn:curv} and heat flow \cref{eq:heatgraph} can be written~as:
\begin{align}\label{eqn:mtx_mean}
    \frac{\partial \bZ(t)}{\partial t}=\left(\bA(\bZ(t))\odot \bB(\bZ(t)) - \bPsi(\bZ(t))\right)\bZ(t),
\end{align}
where $\bB(u,v) = {\rm softmax}\left(\frac{1}{\|\nabla \varphi(u)\|}+\frac{1}{\|\nabla \varphi(v)\|}\right)$ and
\begin{align}\label{eqn:mtx_heat}
    \frac{\partial \bZ(t)}{\partial t}=(\bA(\bZ(t)) - \bI)\bZ(t),
\end{align}
respectively. 
Although the BLEND model in \cite{chamberlain2021blend} is inspired from Beltrami flow, its final formulation (equation (8) in \cite{chamberlain2021blend} where $\bZ(t)$ contains positional and node feature embeddings) is indeed \cref{eqn:mtx_heat}, i.e., heat flow using attention weight function.

\begin{Proposition}\label{lem:lyap_stable}
Diffusion equations \cref{eqn:mtx_beltrami}, \cref{eqn:mtx_mean} and \cref{eqn:mtx_heat} are all bounded-input bounded-output (BIBO) stable \cite{{chen1999linear}}.
\end{Proposition}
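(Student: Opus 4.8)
The starting observation is the simplification already used when deriving \cref{eqn:mtx_beltrami}, \cref{eqn:mtx_mean} and \cref{eqn:mtx_heat}: the node features entering the coefficients $\bA$, $\bB$ and $\bPsi$ are treated as time-independent, so each of the three equations is a \emph{linear time-invariant} system
$$\frac{\partial \bZ(t)}{\partial t}=\bM\,\bZ(t),\qquad \bZ(t)\in\Real^{n\times d},$$
with a constant matrix $\bM\in\Real^{n\times n}$ acting by left multiplication (the system decouples into $d$ identical copies of $\dot{\bz}=\bM\bz$ on $\Real^n$). The plan is to invoke the standard characterisation from \cite{chen1999linear}: such a system is Lyapunov stable (about the origin, equivalently about any trajectory by linearity) exactly when its transition matrix stays bounded, $\sup_{t\ge 0}\norm{e^{t\bM}}<\infty$ — equivalently, every eigenvalue of $\bM$ has non-positive real part and those on the imaginary axis are semisimple. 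So it suffices to prove the uniform bound $\sup_{t\ge 0}\norm{e^{t\bM}}_{\infty}\le 1$, and I would do this for all three cases at once.

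First I would record the common algebraic structure of $\bM$. For heat flow \cref{eqn:mtx_heat}, $\bM=\bA-\bI$, and $\bA$ has non-negative entries and unit row sums (each row is a softmax); hence the off-diagonal entries of $\bM$ are $\ge 0$, the diagonal entries $\bA(u,u)-1$ are $\le 0$, and every row of $\bM$ sums to $0$. For mean-curvature and Beltrami flow \cref{eqn:mtx_mean} and \cref{eqn:mtx_beltrami}, $\bM=\bA\odot\bB-\bPsi$ with $\bPsi$ the diagonal matrix whose $(u,u)$ entry is $\sum_v(\bA\odot\bB)(u,v)$; since $\bA$ and $\bB$ have non-negative entries, $\bA\odot\bB$ is entrywise non-negative, so once more the off-diagonal entries of $\bM$ are $\ge 0$, the diagonal entry $\bM(u,u)=-\sum_{v\ne u}(\bA\odot\bB)(u,v)$ is $\le 0$, and each row of $\bM$ sums to $0$. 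In every case $\bM$ is thus a Metzler matrix with zero row sums — the negative of a (possibly non-symmetric) generalised graph Laplacian.

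Given this, I would set $c:=\max_u\sum_{v\ne u}\bM(u,v)=\max_u|\bM(u,u)|\ge 0$; if $c=0$ then $\bM=0$ and the claim is immediate, so assume $c>0$. Then $\bM+c\bI$ has only non-negative entries and every row sums to $c$, hence $\norm{\bM+c\bI}_{\infty}=c$, and since $c\bI$ commutes with $\bM$,
$$\norm{e^{t\bM}}_{\infty}=e^{-ct}\norm{e^{t(\bM+c\bI)}}_{\infty}\le e^{-ct}\,e^{t\norm{\bM+c\bI}_{\infty}}=e^{-ct}e^{ct}=1\qquad(t\ge 0).$$
Because $\bZ(t)=e^{t\bM}\bZ(0)$, this bound passes to $\bZ$ in any submultiplicative norm, and by the criterion of \cite{chen1999linear} each of the three systems is Lyapunov stable.

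The one point that needs care is the eigenvalue of $\bM$ at $0$: it is genuinely present, since $\bM\bone=0$ (constant node features are conserved, so the systems are not asymptotically stable), and Lyapunov stability would fail if this eigenvalue carried a nontrivial Jordan block, which would force polynomial-in-$t$ growth of $e^{t\bM}$. The contraction estimate $\norm{e^{t\bM}}_\infty\le 1$ is precisely what excludes that, so no explicit Jordan-form computation is required; the remaining work is just to verify the Metzler/zero-row-sum structure honestly in each coefficient matrix — in particular that the entries of $\bB$ are non-negative, which holds because $\bB$ is built from a softmax.
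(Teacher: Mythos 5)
Your proof is correct, and it rests on the same structural observation as the paper's — that the system matrix is (the negative of) a generalized graph Laplacian built from row-stochastic attention weights — but it completes the deduction differently and, in fact, more rigorously. The paper's proof only argues that the eigenvalues of $\bA-\bI$ (resp.\ of $\bPsi^{-1}\bA\odot\bB-\bI$) have non-positive real parts and then invokes the Lyapunov Stability Theorem of \cite{chen1999linear}. That criterion, however, also requires the eigenvalues on the imaginary axis to be semisimple, and the zero eigenvalue is genuinely present here since constant features are conserved; the paper leaves this unaddressed. Your contraction estimate $\norm{e^{t\bM}}_{\infty}\le 1$, obtained by shifting the Metzler matrix $\bM$ by $c\bI$ and using the zero-row-sum property, gives uniform boundedness of the transition matrix directly and so disposes of the Jordan-block issue without any spectral computation. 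A second point in your favour: for the mean-curvature and Beltrami cases the paper reasons about $\bPsi^{-1}\bA\odot\bB-\bI$, whereas the actual system matrix is $\bA\odot\bB-\bPsi=\bPsi\bigl(\bPsi^{-1}\bA\odot\bB-\bI\bigr)$, and eigenvalue locations are not in general preserved under left multiplication by a positive diagonal matrix; you avoid this detour by working with $\bA\odot\bB-\bPsi$ itself, whose Metzler/zero-row-sum structure yields the conclusion directly. The one caveat worth stating explicitly in your write-up is the standing assumption (which the paper also makes implicitly) that the coefficient matrices are frozen in time so that the systems are genuinely linear time-invariant; with that granted, your argument is complete.
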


\section{Model Architecture with Lipschitz Constraint}
\label{sect:model}

In this section, we propose a general graph neural PDE network based on the discussion in \cref{sect:neural_gode}.
A layer of graph neural PDE is illustrated in \cref{fig:gode_layer}. Since we may stack up multiple such layers, the diffusion is performed in a hierarchical manner, where the high-dimensional features are diffused over the graph at the front layers and the low-dimensional features are diffused at the back layers.
\begin{figure}[!htb]
\centering
    \includegraphics[width=0.6\textwidth]{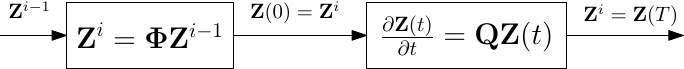}
  \caption{Graph neural PDE at the $i$-th layer, where each node's feature vector are linearly transformed before being diffused over the graph. We have $\bQ = \bA(\bZ(t))\odot \bB(\bZ(t)) - \bPsi(\bZ(t))$ for mean curvature flow and Beltrami flow, while $\bQ = \bA(\bZ(t)) - \bI$ for heat flow.}
  \label{fig:gode_layer}
\end{figure}
One important ingredient in our model is that we perform spectral normalization on $\bW_K$ and $\bW_Q$ in \cref{eqn:w}. This is motivated by the fact that attention models have poor performance when the depth increases. In other words, attention weights tend to be uniformly distributed for excessive message exchanges. This phenomenon becomes even more obvious when the nodes' features are diffused according to \cref{eqn:mtx_beltrami}, \cref{eqn:mtx_mean} or \cref{eqn:mtx_heat} as solving such a PDE normally requires many discrete steps. To overcome this over-smoothing problem, we utilize the strategy proposed in \cite{lipschitznorm_icml21} to enforce Lipschitz continuity by normalizing the attention scores. Besides, enforcing Lipschitz continuity can also help improve the robustness of the model because the Lipschitz constant controls the perturbation of the output given a bounded input perturbation. \cref{fig:tsne_q2} shows that attention weights become overly smooth if no spectral normalization is applied. 
\begin{figure}[!htp]
\vspace{-0.3cm}
\centering
    \includegraphics[width=0.38\textwidth]{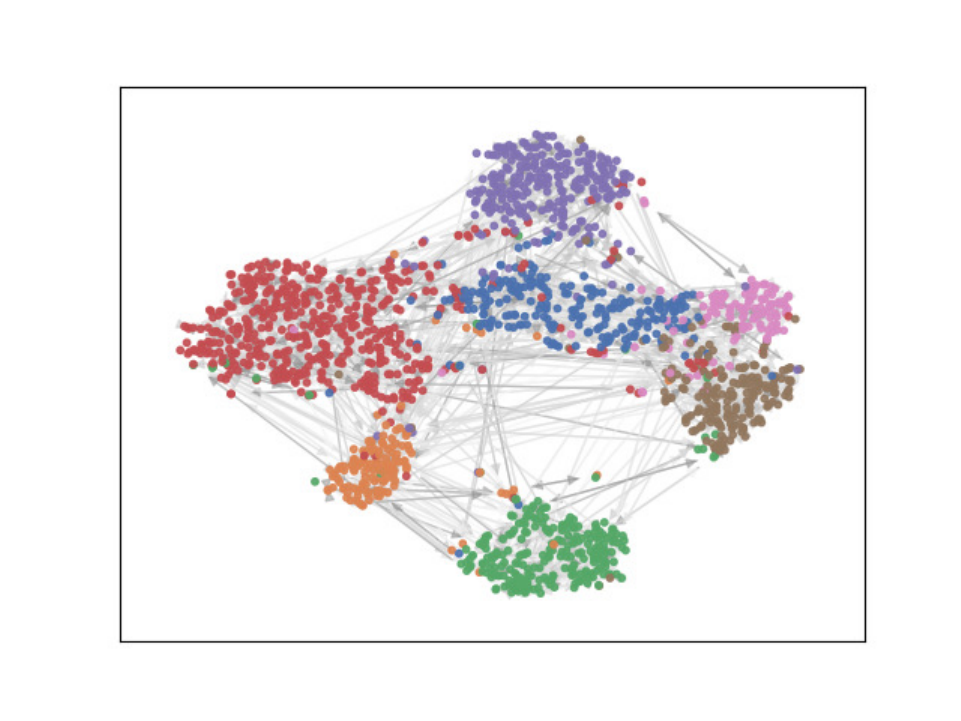}
    \includegraphics[width=0.38\textwidth]{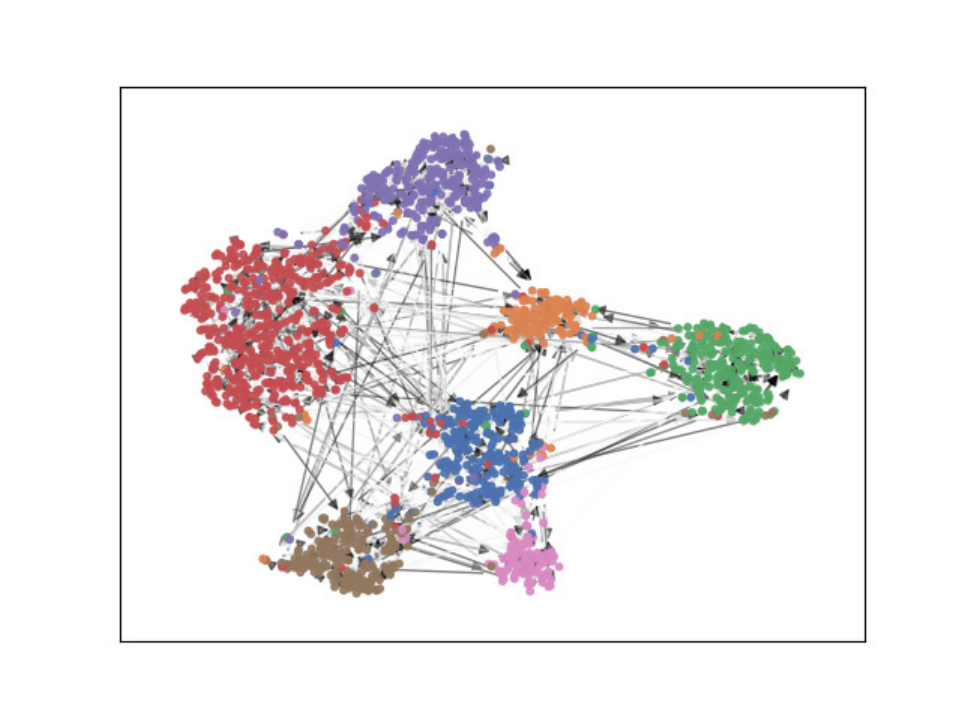}
    \vspace{-0.2cm}
  \caption{Impact of applying spectral normalization visualized using t-SNE with attention weights using Cora dataset \cite{McCallum2004AutomatingTC}. Left: graph neural PDE with no spectral normalization, right: graph neural PDE with spectral normalization. The darker the edge, the larger the attention weight.}
  \label{fig:tsne_q2}
\end{figure}
\section{Experiments}
\label{sect:exper}

In this section, we compare graph neural PDEs under different flows to popular GNN architectures: GAT \cite{velickovic2018graph}, GraphSAGE \cite{hamilton2017inductive}, GIN \cite{xu2018how}, APPNP \cite{KlicperaBG19}, and the state-of-the-art GNN defenders: RobustGCN \cite{zhuKDD2019}, GNNGuard \cite{zhangNeurips2020}, GCNSVD \cite{EntezariWSDM2020}, on standard node classification benchmarks. 
In our experiments\footnote{Our experiments are run on a GeForce RTX 3090 GPU.}, we use the following datasets: Cora (citation networks) \cite{McCallum2004AutomatingTC}, Citeseer (citation networks) \cite{SenNamata2008} 
 and PubMed (biomedical literature) \cite{namata:mlg12-wkshp}.
We use a refined version of these datasets provided by \cite{zheng2021grb}. We refer the readers to the supplementary material for more details.   Our experiment codes are provided in \url{https://github.com/zknus/Robustness-of-Graph-Neural-Diffusion}.

\begin{table}[!htp]
\centering
\caption{Node classification accuracy (\%) on adversarial examples using different GNNs. The implicit Adam PDE solver with step size 2 is used for Beltrami. We denote those experiments that are computationally too heavy to run by ``-''.  The best and the second-best result for each criterion are highlighted in \red{red} and \blue{blue} respectively.} \label{tab:adv} 
\tiny
\makebox[\textwidth][c]{
\begin{tabular}{cccccccccc} 
\toprule
Dataset & Attack & Beltrami  in \cref{eqn:mtx_beltrami} & RobustGCN & GNNGuard & GCNSVD  & GAT & GraphSAGE & GIN & APPNP \\
\midrule
\multirow{3}{*}{Cora} 
& \emph{clean} & 75.93 $\pm$ 1.46  & \blue{81.34 $\pm$ 0.66} & 79.44 $\pm$ 1.18 & 69.28$\pm$1.37  & 79.74 $\pm$ 1.59 & 76.75 $\pm$ 1.52 & 76.79 $\pm$ 1.35 & \red{83.06 $\pm$ 1.06} \\
& SPEIT &  \blue{61.87 $\pm$ 0.49} & 36.16 $\pm$ 0.41 & \red{78.50 $\pm$ 2.27} & 37.50 $\pm$ 0.74  & 38.10 $\pm$ 2.48 & 35.82 $\pm$ 0.01 & 35.82 $\pm$ 0.01 & 36.79 $\pm$ 0.61 \\
& TDGIA &  \blue{62.84 $\pm$ 1.17} & 53.28 $\pm$ 8.61 & \red{78.92 $\pm$ 1.80} & 40.77 $\pm$ 3.34  & 35.64 $\pm$ 12.91 & 39.78 $\pm$ 6.46 & 39.63 $\pm$ 2.38 & 60.52 $\pm$ 4.43 \\
\midrule

\multirow{3}{*}{Citeseer} 
& \emph{clean} & 70.14 $\pm$ 1.80 & \blue{70.72 $\pm$ 1.15} & 69.69 $\pm$ 1.83 & 66.93 $\pm$ 1.07  & 69.81 $\pm$ 1.43 & 69.78 $\pm$ 1.31 & 68.81 $\pm$ 1.58 & \red{70.75 $\pm$ 0.86} \\
& SPEIT & \blue{66.46 $\pm$ 1.33} & 28.56 $\pm$ 7.87 & \red{69.72 $\pm$ 1.84} & 21.16 $\pm$ 1.32  & 26.00 $\pm$ 11.14 & 19.75 $\pm$ 1.82 & 23.54 $\pm$ 5.30 & 22.19 $\pm$ 0.86 \\
& TDGIA & \blue{65.77 $\pm$ 1.28} & 38.81 $\pm$ 10.84 & \red{69.50 $\pm$ 1.86} & 20.77 $\pm$ 2.52 & 19.63 $\pm$ 6.53  & 28.77 $\pm$ 7.73 & 28.65 $\pm$ 5.08 & 54.48 $\pm$ 8.56  \\
\midrule

\multirow{3}{*}{PubMed} 
& \emph{clean} & \blue{86.94 $\pm$ 0.25} & 75.55 $\pm$ 0.32 & 84.80 $\pm$ 0.51 & - & 84.91 $\pm$ 0.76 & \red{89.22 $\pm$ 0.25} & 76.71 $\pm$ 0.14 & 77.50 $\pm$ 0.54  \\
& SPEIT & \red{86.66 $\pm$ 0.68} & 75.54 $\pm$ 0.54 & \blue{84.36 $\pm$ 0.58} & - & 40.94 $\pm$ 2.47 & 39.22 $\pm$ 0.00 & 76.71 $\pm$ 0.14 & 77.55 $\pm$ 0.54  \\
& TDGIA & \red{85.56 $\pm$ 0.91} & 75.53 $\pm$ 0.36 & \blue{84.00 $\pm$ 1.12} & - & 39.78 $\pm$ 0.29 & 60.40 $\pm$ 11.23 & 77.58 $\pm$ 0.71 & 77.45 $\pm$ 0.68  \\

\bottomrule
\end{tabular}}
\vspace{-0.35cm}
\end{table}

\subsection{Attack Setup}
We apply the setup introduced in the graph robustness benchmark (GRB) \cite{zheng2021grb}. Based on the assumption that nodes with lower degrees are easier to attack,  GRB constructs three test subsets of nodes with different degree distributions. According to the average degrees, GRB defines these subsets of nodes as Easy, Medium, Hard, or Full. In the Easy subset, attacks are easy to succeed and hence the worst performance is expected. For the rest of the nodes, they are divided into a train set (60\%) and val set (10\%), for training and validation respectively. In our experiments, we choose the Easy subset, i.e., the most challenging mode.

Following GRB, in this paper, we mainly consider the following real-world adversarial attack settings: 
{\em Black-box}: The attacker does not know the defender's method and vice versa. Attackers firstly attack a pre-trained GCN \cite{kipf2017semi} and then transfer the perturbed graphs to the target model.
{\em Evasion}: Attacks will only happen during the inference phase.
{\em Inductive}: GNNs are used to classify unseen data (e.g., new users), i.e., validation or test data are unseen during training.
{\em Injection}: Attackers can only inject new nodes but not modify the target nodes directly. This reflects application scenarios like in online social networks where it is usually hard to hack into users' accounts and modify their profiles. However, it is easier to create fake accounts and connect them to existing users. For other attack settings such as white-box attacks and modification attacks, we have included relevant experiments and discussions in the supplementary material.

\begin{table}[!tbh]
\centering
\caption{Node classification accuracy (\%) on adversarial examples using BeltramiGuard.} \label{tab:BelGuard}
\vspace{-6pt}
\small
\begin{tabular}{ccccc} 
\toprule
Model & Attack & Cora & Citeseer & PubMed \\
\midrule
\multirow{3}{*}{BeltramiGuard} 
& \emph{clean} & 73.01 $\pm$ 2.01  & 69.90 $\pm$ 0.44 & 87.77 $\pm$ 0.14   \\
& SPEIT        & 73.01 $\pm$ 2.62  & 69.90 $\pm$ 0.44 & 87.68 $\pm$ 0.03   \\
& TDGIA        & 72.14 $\pm$ 1.20  & 69.90 $\pm$ 0.44 & 88.16 $\pm$ 0.61    \\
\bottomrule
\end{tabular}
\end{table}

We apply two state-of-the-art injection attack methods: SPEIT \cite{speit_attack} and Topological Defective Graph Injection Attack (TDGIA) \cite{zouKDD2021}. These two attacks are the two strongest attacks reported in the GRB. For both attacks, both the maximum allowable injected nodes and the maximum allowable injected edges are set to 50 for Cora and Citeseer, and 300 for PubMed. We provide more details about how these two attacks work under evasion, black-box and injection setting in the supplemental material.

\cref{tab:adv} shows that graph neural PDE induced from Beltrami flow is more robust than all other GNNs except for GNNGuard, which is specifically designed to remove malicious edges and is thus robust against topology perturbation.  This shows that the output features from a graph PDE are stable under topology perturbation, as suggested by \cref{lem:linear_graph}.

Furthermore, the heat diffusion process on a manifold tends to diffuse differently under different geometries. For example, it diffuses slower at points with positive curvature, and faster at points with negative curvature \cite{sun2009concise,bhaskar2022diffusion}. In \cite{ToppingICLR2022} and \cite{ZhuNIPS2020}, the authors show different datasets have different geometric properties like hyperbolicity distribution or Balanced Forman curvature. Our theoretical analysis only shows a loose uniform bound in terms of the adjacency matrix, while the performance is very much dataset-dependent. More advanced theoretical analysis for different datasets is highly non-trivial and needs further extensive investigations.

Our graph neural PDE can be combined with GNNGuard. We denote this model as BeltramiGuard. \cref{tab:BelGuard} shows that BeltramiGuard render attacks in vain and exceeds GNNGuard on the Citeseer and PubMed datasets. Note from \cref{tab:adv} that the vanilla Beltrami model already surpasses GNNGuard on the large PubMed. More details are provided in the supplementary material.

\subsection{Ablation Studies} \label{sec:abla}

\subsubsection*{Diffusion Schemes} 
We compare different diffusion equations: heat flow \cref{eqn:mtx_heat} where $w([u,v])=1$ for all $[u, v] \in \calE$, GRAND/BLEND 
\cite{chamberlain2021grand,chamberlain2021blend} which is another heat flow where $w([u,v])$ in \cref{eqn:mtx_heat} is the attention function defined in \cref{eqn:w}, mean curvature flow \cref{eqn:mtx_mean}, and Beltrami flow \cref{eqn:mtx_beltrami}. For GRAND/BLEND, we stack three neural PDE layers using the architecture proposed in \cref{fig:gode_layer} since the original GRAND/BLEND in \cite{chamberlain2021grand,chamberlain2021blend}, which has only one PDE layer, does not perform well.

\begin{table}[!tbh]
\centering
\caption{Node classification accuracy (\%) on adversarial examples using graph neural PDEs induced from different flows, where implicit Adam PDE solver with step size 2 is used.  } \label{tab:flows}
\small
\begin{tabular}{cccccc} 
\toprule
Dataset & Attack & Heat & GRAND/BLEND & Mean Curvature & Beltrami \\
\midrule
\multirow{3}{*}{Cora} 
& \emph{clean} & \red{78.86 $\pm$ 1.78}  &  74.89 $\pm$ 1.29  & \blue{76.01 $\pm$ 2.20}  &  75.93 $\pm$ 1.46 \\
& SPEIT        & 38.66 $\pm$ 2.40  &  55.67 $\pm$ 3.60  & \blue{60.67 $\pm$ 1.31}  &  \red{61.87 $\pm$ 0.49}  \\
& TDGIA        & 60.26 $\pm$ 5.19  &  59.55 $\pm$ 6.41  & \blue{62.01 $\pm$ 2.37}  &  \red{62.84 $\pm$ 1.17}   \\
\midrule

\multirow{3}{*}{Citeseer} 
& \emph{clean} & 69.47 $\pm$ 1.22  &  69.45 $\pm$ 0.91  & \red{70.50 $\pm$ 1.63}  &  \blue{70.14 $\pm$ 1.80} \\
& SPEIT        & 22.95 $\pm$ 5.07  &  39.94 $\pm$ 6.94  & \blue{65.39 $\pm$ 1.62}  &  \red{66.46 $\pm$ 1.33}  \\
& TDGIA        & 52.42 $\pm$ 11.2  &  54.92 $\pm$ 7.64  & \red{66.83 $\pm$ 1.59}  &  \blue{65.77 $\pm$ 1.28}   \\
\midrule

\multirow{3}{*}{PubMed} 
& \emph{clean} &  86.31 $\pm$ 0.46 &  \red{88.89 $\pm$ 0.38}  & \blue{88.45 $\pm$ 0.32}  &  86.94 $\pm$ 0.25 \\
& SPEIT        &  40.77 $\pm$ 1.78 &  39.39 $\pm$ 0.26  & \red{87.13 $\pm$ 0.33}  &  \blue{86.66 $\pm$ 0.68}  \\
& TDGIA        &  42.63 $\pm$ 5.28 &  63.91 $\pm$ 11.61  & \red{85.79 $\pm$ 0.82}  &  \blue{85.56 $\pm$ 0.91}   \\

\bottomrule
\end{tabular}
\end{table}

From \cref{tab:flows}, we observe that even the vanilla time-invariant heat flow preserves some robustness as compared to non-PDE GNNs in \cref{tab:adv}. This further validates our theoretical analysis in \cref{lem:linear_graph}, which suggests that if the topology perturbation is bounded, the learned representations are close to those under the ``clean'' scenario. We can observe that 1) the flows which are capable of preserving non-smooth features are generally more robust than heat flow, 2) the proposed mean curvature flow and Beltrami flow are more robust than GRAND/BLEND,  and 3) these two flows generally suffer less performance variance than the other methods.

\subsubsection*{More Experiments}
Due to space constraint, we refer the reader to the supplementary material for more ablation studies including the impact of the Lipschitz constraint, PDE solvers, the number of layers, and time complexity. The model performance under other attacks is also presented in the  supplementary~material.
\section{Conclusion}\label{sect:conc}
In this paper, we have introduced a general graph neural PDE framework from which several graph neural PDEs are proposed. We analyzed the robustness of the graph neural PDEs and showed that graph neural PDEs are inherently robust against topology perturbations and are also BIBO stable. We provided theoretical evidence showing that the robustness of graph neural PDEs stems from the stability of the heat kernel and semigroup during the diffusion process. Moreover, we conducted extensive experiments that empirically verify the robustness of the proposed graph neural PDEs when compared with an existing graph neural PDE induced from approximated heat flow, popular GNNs and the state-of-the-art GNN defenders.

\appendix

\textbf{\LARGE Appendix}

In \cref{sec:supp_dataset} of this supplementary material, we provide more details about the datasets used in our main paper. 
We brief the attack settings in \cref{sec:att_set}. More experiments and ablation studies that are not included in the main paper due to space limit are now presented in \cref{sec:more_abl_exp}. 
We include supplemental experiments with more datasets in \cref{sec:supp_more_exp}. Further ablation studies of our model are also presented in \cref{sec:supp_ablation}.
In addition, we present an extension of mean curvature and Beltrami flows in \cref{sec:supp_extension}. Additional implementation details of the models are provided in \cref{sec:supp_imp_det}. 
The proofs for all theoretical results in the main paper are given in \cref{sec:supp_proof}.

\section{Datasets}\label{sec:supp_dataset}
In our main paper, we conduct experiments using the first three datasets in \cref{tab:data}: Cora (citation networks) \cite{McCallum2004AutomatingTC}, Citeseer (citation networks) \cite{SenNamata2008} and PubMed (biomedical literature) \cite{namata:mlg12-wkshp}. 
In this supplementary material, we conduct further experiments using the other three datasets in \cref{tab:data}: Flickr (social networks) \cite{McAuleyECCV2012}, Coauthor (academic networks) \cite{shchur2018pitfalls} and Amazon Computer (recommendation networks) \cite{McAuleySIGIR2015}.
We use a refined version of these datasets provided by \cite{zheng2021grb}, where the main statistics are summarized in \cref{tab:data}.  The features in these datasets are normalized by an $\arctan$ (bijective) transformation \cite{zheng2021grb}, which permits attackers to restore the original features and thus allows real-world adversarial attacks.

\begin{table}[!tbh]
\centering
\caption{Statistic of Datasets} \label{tab:data} 
\small
\begin{tabular}{ccccccc} 
\toprule
Dataset &  \# Nodes & \# Edges & \# Features & \# Classed & Feature Range (norm) \\
\midrule

 Cora 	   & 2,680  & 5,148  & 302  & 7  & $-0.94\sim0.94$  \\
\midrule

 Citeseer 	   & 3,191  & 4,172	  & 768  & 6  & $-0.96\sim0.89$  \\
\midrule

 PubMed 		   & 19,717  & 44,325	  &  500  & 3  &   $-0.14\sim0.99$ \\
\midrule

  Coauthor 	   & 18,333  & 81,894 	  & 6,805  & 15  &  $-0.04\sim1.00$ \\
 \midrule
 
 Flickr 	   & 89,250  & 449,878	  & 500  & 7  & $-0.47\sim1.00$  \\
 \midrule

  
  Amazon Computer 	   & 13,752  & 245,861	  & 767  & 10  & $-0.40\sim0.60$  \\
\bottomrule
\end{tabular}
\end{table}

\section{Attack Settings}\label{sec:att_set}
There are three common categories of adversarial attacks studied in the literature \cite{zheng2021grb}:
\begin{itemize}
    \item  Poison (attack occurs in training) or evasion (attack occurs in testing). 
    \item White-box (attackers knows target model/method) or black-box (attackers do not know target model/method and thus need to attack a surrogate model and then transfer to target model).
    \item Injection (attackers inject nodes/edges to the original graph and generate attributes for the injected nodes) or modification (attackers modify the original graph including its topology and node features directly).
\end{itemize}

Our experiments in the main paper mainly focus on the evasion, black-box and injection attack setting, which we believe are the most realistic attack settings. The two selected attack methods SPEIT and TDGIA are tailored for these attack settings. To be more specific, we carry out the following:
\begin{itemize}
    \item  Evasion: SPEIT and TDGIA are performed on a trained model during testing time.
    \item  Black-box: SPEIT and TDGIA are used to attack a trained GCN, i.e., a surrogate model, to generate graph perturbations and then the target model is tested on this perturbed graph.
    \item Injection: when perturbing the graph based a trained GCN, SPEIT and TDGIA firstly inject new nodes into the original graph and then generate the injected nodes' features. 
\end{itemize}
In this supplementary material, we include more attack settings such as white-box attacks and modification attacks. Further experiments to test graph PDE robustness against  node attribute perturbation are also included.

\section{More Experiments and Ablation Studies}\label{sec:more_abl_exp}

We include more experiments and ablation studies which are not included in the main paper due to space limit. More specially, in this section, we conduct experiments to study the effect of  Lipschitz constraint and the number of layers,  inference time complexity, more injection attacks including evasion white-box injection attack and the attacks with various attack strengths, and the modification attackers which modify the original graph including its topology and node features directly.

Our codes are developed based on the following two repositories:
\begin{itemize}
    \item \url{https://github.com/twitter-research/graph-neural-pde} and
    \item \url{https://github.com/THUDM/grb},
\end{itemize}
where the new diffusion schemes and their induced neural PDEs are developed based on the first repository and we follow the second repository to set up the robustness evaluation benchmark.

\subsection{PDE Solvers} 
The impact of PDE solvers on the performance can be observed in \cref{tab:solvers_rev}. The adaptive step-size solver Dopri5 performs better than the fixed-step solvers (Implicit/Explicit Adam) at the cost of higher computational complexity. For fixed-step solvers, increasing the step size $\tau$ reduces the variance. For sufficiently large step sizes, the implicit method converges faster than the explicit method.

\begin{table}[!tbh]
\centering
\caption{Node classification accuracy (\%) using graph neural PDEs induced from Beltrami flow, when different PDE solvers are applied. Experiments are conducted on Citeseer dataset.} \label{tab:solvers_rev}
\begin{tabular}{ccccc} 
\toprule
PDE solvers & Param. & Clean & SPEIT & TDGIA \\
\midrule
\multirow{2}{*}{Implicit Adam} 
& $\tau=1$  &  70.59  $\pm$  2.26   &  64.64   $\pm$    2.60  & 65.62  $\pm$  0.96 \\
& $\tau=2$  &  70.14  $\pm$  1.80    &   66.46  $\pm$  1.33  & 65.77  $\pm$  1.28 \\
& $\tau=10$ &  70.22  $\pm$   0.70  &  64.14   $\pm$   1.00   & 65.75  $\pm$  1.65 \\
\midrule

\multirow{2}{*}{Explicit Adam} 
& $\tau=1$  & 69.72  $\pm$  1.14  &  62.88  $\pm$  2.70  & 64.81  $\pm$  1.62  \\
& $\tau=2$  &  69.59  $\pm$  0.92    &   65.05  $\pm$  1.39  &  65.38  $\pm$  1.35  \\
& $\tau=10$ & 69.91  $\pm$  0.96  &  64.76  $\pm$  1.34  & 65.52  $\pm$  3.17  \\
\midrule

Dopri5  & -
& 70.91   $\pm$  0.98   &  66.96   $\pm$  1.46   & 67.01 $\pm$ 1.94 \\

\bottomrule
\end{tabular}
\end{table}

\subsection{Lipschitz Constraint and Number of Layers}

On one hand, keeping as few layers as possible can mitigate over-smoothing problem. On the other hand, too few layers result in underfitting problem, i.e., the test clean accuracy is low. To understand this better, we performed experiments using different number of layers. The results are summarised in \cref{tab:layers}. The number of layers to use is a hyperparameter we tuned during training.

\begin{table}[!t]
\small
\centering
\caption{Node classification accuracy (\%) on adversarial examples generated from SPEIT. We apply graph neural PDEs induced from Beltrami flow with or without the Lipschitz constraint. Experiments are conducted on Cora dataset.} \label{tab:lips} 
\begin{tabular}{cccccc} 
\toprule
Clean/Robust acc. & \# layers used & Beltrami  & Beltrami w/o Lips.\\
\midrule
\multirow{4}{*}{Clean} 
& 1 &  63.96 $\pm$ 1.95  &  63.51 $\pm$ 3.18   \\
& 2 &  72.46 $\pm$ 0.76   &  69.93 $\pm$ 1.48     \\
& 3 &  73.51 $\pm$ 1.49   &   72.69 $\pm$ 1.89    \\
& 4 &  75.93 $\pm$ 1.46  &   75.15 $\pm$ 1.41  \\
\midrule

\multirow{4}{*}{Robust} 
& 1 &  59.10 $\pm$ 3.20  &  58.06 $\pm$ 1.78  \\
& 2 &  63.58 $\pm$ 1.39   &  58.96 $\pm$ 2.82   \\
& 3 &  60.52 $\pm$ 1.56   &   57.46 $\pm$ 1.65  \\
& 4 &  61.87 $\pm$ 0.49  &   56.79 $\pm$ 1.52   \\
\bottomrule
\label{tab:layers}
\end{tabular}
\end{table}

We have tried using the same number of layers for GRAND/BLEND as in their paper. However, the test clean accuracy is low. This is mainly due to two reasons: 1) we are using inductive setting whereas the paper of GRAND/BLEND uses transductive training; 2) the datasets we are using have been calibrated for the robustness evaluation, which is different from the original datasets used by the paper of GRAND/BLEND. For example, grb-cora \cite{zheng2021grb} has node feature size of 302 while the original Cora dataset has node feature size of 1433. \cref{tab:layers} provides empirical evidence for GRAND/BLEND needing more layers in our setting.


\subsection{Time Complexity}

We have summarized the time complexity in \cref{tab:time}. The time is computed by averaging 500 diffusion operations. We can see that Beltrami and GRAND/BLEND have similar time complexity to the two defenders GNNGuard and GCNSVD small step sizes but incur more computation time than the other GNNs to complete a diffusion process. Heat, unlike Beltrami and GRAND/BLEND, does not use attention and has the lowest time complexity among all the neural PDEs. 

\begin{table}[!tbh]
\centering
\caption{Top: Average time spent on a Beltrami diffusion process, i.e., time to solve \cref{eqn:mtx_beltrami} and a counterpart in GRAND/BLEND, when different PDE solvers are applied and multiple step size options are tested for each solver. Bottom: Average time spent on an aggregation step using different GNNs. Experiments are conducted on the Citeseer dataset.} \label{tab:time}
\begin{tabular}{c cc cc cc} 
\toprule
PDE solvers & Param. & Beltrami   & GRAND/BLEND  & Heat\\
\midrule
\multirow{3}{*}{Implicit Adam} 
& $\tau=1$    &  9.8ms & 6.6ms & 3.0ms\\
& $\tau=2$    &  17.0ms &  11.2ms & 4.0ms\\
& $\tau=10$   &  48.1ms & 46.6ms & 9.2ms\\
\midrule

\multirow{3}{*}{Explicit Adam} 
& $\tau=1$    & 10.0ms  &  6.8ms & 3.0ms\\
& $\tau=2$    &  16.8ms &  11.2ms & 3.6ms\\
& $\tau=10$   & 32.6ms  &  21.0ms & 5.8ms\\
\midrule

Dopri5  & - & 66.0ms  & 20.0ms  & 13.0ms\\

\bottomrule
\end{tabular}

\vspace{.5cm}

\begin{tabular}{ccccccc} 
\toprule
RobustGCN & GNNGuard & GCNSVD  & GAT & GraphSAGE & GIN & APPNP \\
\midrule
0.6ms & 13.2ms  &9.0ms   & 1.8ms   &0.8ms   &  1.0ms & 1.6ms \\
\bottomrule

\end{tabular}
\end{table}

\subsection{White Box Attacks}

We now include the white box attacks for our model and the baselines. The results are shown in \cref{tab:white-box}. We observe that our model outperforms the baselines under white box attacks. The results further validate that neural PDEs are intrinsically robust to graph topology perturbations, as indicated by \cref{lem:linear_graph} and \cref{lem:linear_graph_timevar}.

\begin{table}[!t]
\centering

\caption{Node classification accuracy (\%) on adversarial examples generated from \emph{white-box} attacks. The best and the second-best result for each criterion are highlighted in \red{red} and \blue{blue}, respectively.}  \label{tab:white-box} 
\tiny

\makebox[\textwidth][c]{
\begin{tabular}{ccccccccccc} 
\toprule
Dataset & Attack & BeltramiGuard & Beltrami  & RobustGCN & GNNGuard & GCNSVD  & GAT & GraphSAGE & GIN & APPNP \\
\midrule
\multirow{3}{*}{Cora} 
& \emph{clean} &  73.01 $\pm$ 2.01     &   75.93 $\pm$ 1.46   &   \blue{81.34 $\pm$ 0.66}  &  79.44 $\pm$ 1.18   &   69.28 $\pm$ 1.37   &   79.74 $\pm$ 1.59   &  76.75 $\pm$ 1.52   &  76.79 $\pm$ 1.35     &  \red{83.06 $\pm$ 1.06}\\
& SPEIT &  \blue{71.94 $\pm$ 0.31}  &  48.95 $\pm$ 3.32  &  36.12 $\pm$ 0.31  & \red{80.22 $\pm$ 0.91}   &  33.06 $\pm$ 7.86   &  19.18 $\pm$ 9.47   & 17.16 $\pm$ 11.30   &  21.49 $\pm$ 13.08    & 19.77 $\pm$ 10.39 \\
& TDGIA &  \blue{67.39 $\pm$ 1.97}  &  52.61 $\pm$ 4.20 &   36.27 $\pm$ 0.61  &  \red{78.43 $\pm$ 1.10}   &  11.94 $\pm$ 0.0  & 7.99 $\pm$ 4.40   & 21.57 $\pm$ 4.82   &  38.06 $\pm$ 4.69    & 51.94 $\pm$ 5.32 \\
\midrule

\multirow{3}{*}{Citeseer} 
& \emph{clean} &   69.90 $\pm$ 0.44  &   \blue{70.41 $\pm$ 1.38}  &   70.72 $\pm$ 1.15  &  69.69 $\pm$ 1.83   &   66.93 $\pm$ 1.07   &   69.81 $\pm$ 1.43   &  69.78 $\pm$ 1.31   &   68.81 $\pm$ 1.58    &  \red{70.75 $\pm$ 0.86} \\
& SPEIT &  \blue{68.78 $\pm$ 0.82}  & 55.24 $\pm$ 6.90  &  20.19 $\pm$ 2.61 &  \red{69.22 $\pm$ 1.90}  &  19.31 $\pm$ 3.55  & 14.67 $\pm$ 5.05  & 19.81 $\pm$ 3.06   &  12.54 $\pm$ 6.33   & 20.75 $\pm$ 2.32    \\
& TDGIA &  \blue{67.96 $\pm$ 1.22}   & 53.61 $\pm$ 14.01 &  18.68 $\pm$ 4.06 &  \red{68.40 $\pm$ 1.44} &  16.93 $\pm$ 2.41 & 14.23 $\pm$ 5.05   &   20.69 $\pm$ 5.64   &  18.87 $\pm$ 3.61   & 25.70 $\pm$ 6.78     \\
\midrule

\multirow{3}{*}{PubMed} 
& \emph{clean} &   87.77 $\pm$ 0.14  &   \red{86.94 $\pm$ 0.25}  &   75.55 $\pm$ 0.32  &  84.80 $\pm$ 0.51   &   -   &   84.91 $\pm$ 0.76   &  \blue{89.22 $\pm$ 0.25}   &   76.71 $\pm$ 0.14    &  77.50 $\pm$ 0.54 \\
& SPEIT & \red{85.26 $\pm$ 1.42}  &  \blue{85.13 $\pm$ 0.79}  &  75.07 $\pm$ 0.30  & 84.30 $\pm$ 1.34   &   -   &  39.22 $\pm$ 0.0   & 40.46 $\pm$ 1.69   &  75.58 $\pm$ 1.03    & 77.62 $\pm$ 0.10 \\
& TDGIA &  81.36 $\pm$ 3.09  &  \red{84.88 $\pm$ 0.46}  &  75.78 $\pm$ 0.32  & \blue{83.33 $\pm$ 2.91}   &   -   &  37.96 $\pm$ 1.82   & 44.85 $\pm$ 3.06   &  75.72 $\pm$ 0.70   & 77.32 $\pm$ 0.44 \\

\bottomrule

\end{tabular}}
\end{table}

\subsection{Injection Attacks with Various Attack Strengths}
Under injection attacks with different number of injection nodes and edges, we have performed more experiments with various attack strengths. \cref{tab:inject_diff} shows that the robustness is not significantly affected by the number of nodes and edges injected by attackers as the original is not perturbed in this setting. 

\begin{table}[!hbt]
\centering
\caption{Node classification accuracy (\%) on adversarial examples generated from SPETI and TDGIA under black-box injection setting where different number of injected nodes and edges are applied.} \label{tab:inject_diff} 
\tiny
\makebox[\textwidth][c]{
\begin{tabular}{ccccccccccc} 
\toprule
Dataset & Attack & \# nods/edges injected & Beltrami  & RobustGCN & GNNGuard & GCNSVD  & GAT & GraphSAGE & GIN & APPNP \\
\midrule
\multirow{8}{*}{Cora} 
& SPEIT & 50/50 &   61.87 $\pm$ 0.49  &   36.16 $\pm$ 0.41  &  78.50 $\pm$ 2.27   &   37.50 $\pm$ 0.74   &   38.10 $\pm$ 2.48   &  35.82 $\pm$ 0.01   &  35.82 $\pm$ 0.01   &   36.79 $\pm$ 0.61  \\
& SPEIT & 100/100 &   57.18 $\pm$ 1.68  &   35.82 $\pm$ 0.00  &  79.39 $\pm$ 1.41   &   35.82 $\pm$ 0.00   &   37.22 $\pm$ 1.87   &  35.82 $\pm$ 0.00   &  35.82 $\pm$ 0.00   &  35.82 $\pm$ 0.00  \\
& SPEIT & 150/150 &   55.88 $\pm$ 0.56  &   35.82 $\pm$ 0.00  &  80.78 $\pm$ 0.71   &   35.82 $\pm$ 0.00   &   26.12 $\pm$ 16.29   &  35.82 $\pm$ 0.00   &  35.82 $\pm$ 0.00   &   29.85 $\pm$ 11.94  \\
& SPEIT & 200/200 &   58.21 $\pm$ 2.81  &   35.82 $\pm$ 0.00  &  79.38 $\pm$ 0.64   &   29.88 $\pm$ 11.96   &   29.85 $\pm$ 11.94   &  35.82 $\pm$ 0.00   &  35.82 $\pm$ 0.00   &   22.30 $\pm$ 15.83  \\
& TDGIA & 50/50 &   62.84 $\pm$ 1.17  &   53.28 $\pm$ 8.61  &  78.92 $\pm$ 1.80   &   40.77 $\pm$ 3.34   &   35.64 $\pm$ 12.91   &  39.78 $\pm$ 6.46   &  39.63 $\pm$ 2.38   &   60.52 $\pm$ 4.43  \\
& TDGIA & 100/100 &   62.44 $\pm$ 1.56  &   54.11 $\pm$ 3.18  &  77.99 $\pm$ 2.24   &   42.29 $\pm$ 0.43   &   36.94 $\pm$ 16.96   &  34.24 $\pm$ 9.99   &  37.03 $\pm$ 1.59   &   62.50 $\pm$ 3.47  \\
& TDGIA & 150/150 &   64.18 $\pm$ 0.87  &   52.43 $\pm$ 10.84  &  79.39 $\pm$ 1.74   &   11.94 $\pm$ 0.00   &   39.74 $\pm$ 16.09   &  38.34 $\pm$ 1.48   &  35.82 $\pm$ 0.00   &   61.37 $\pm$ 7.07  \\
& TDGIA & 200/200 &   61.48 $\pm$ 1.68  &   48.88 $\pm$ 6.60  &  80.78 $\pm$ 1.89   &   11.94 $\pm$ 0.00   &   46.74 $\pm$ 12.70   &  38.34 $\pm$ 2.50   &  36.19 $\pm$ 0.75   &   61.10 $\pm$ 0.98  \\
\midrule

\multirow{8}{*}{Citeseer} 
& SPEIT & 50/50 &   65.52 $\pm$ 2.26  &   28.56 $\pm$ 7.87  &  69.72 $\pm$ 1.84   &   21.16 $\pm$ 1.32   &   26.00 $\pm$ 11.14   &  19.75 $\pm$ 1.82   &  23.54 $\pm$ 5.30   &   22.19 $\pm$ 0.86  \\
& SPEIT & 100/100 &   64.74 $\pm$ 1.27  &   17.01 $\pm$ 6.05  &  69.51 $\pm$ 1.61   &   19.75 $\pm$ 2.56   &   19.12 $\pm$ 1.47   &  18.81 $\pm$ 2.74   &  20.61 $\pm$ 2.39   &  20.53  $\pm$ 1.57  \\
& SPEIT & 150/150 &   65.13 $\pm$ 2.27  &   20.69 $\pm$ 1.49  &  70.22 $\pm$ 1.63   &   18.18 $\pm$ 1.81   &   18.97 $\pm$ 1.57   &  18.34 $\pm$ 1.63   &  19.75 $\pm$ 0.00   &   19.75 $\pm$ 0.00  \\
& SPEIT & 200/200 &   64.11 $\pm$ 0.83  &   15.52 $\pm$ 7.19  &  69.36 $\pm$ 1.29   &   18.96 $\pm$ 3.00   &   19.12 $\pm$ 1.26   &   22.10 $\pm$ 5.11   &  21.16 $\pm$ 4.97   &  17.71 $\pm$ 1.37  \\
& TDGIA & 50/50 &   65.77 $\pm$ 1.28  &   38.81 $\pm$ 10.84  &  69.50 $\pm$ 1.86   &   20.77 $\pm$ 2.52   &   19.63 $\pm$ 6.53   &  28.77 $\pm$ 7.73   &  28.65 $\pm$ 5.08   &   54.48 $\pm$ 8.56  \\
& TDGIA & 100/100 &   65.60 $\pm$ 1.68  &   45.61 $\pm$ 6.75  &  66.22 $\pm$ 5.44   &   25.39 $\pm$ 6.33   &   25.08 $\pm$ 10.21   &  35.81 $\pm$ 11.99   &  29.94 $\pm$ 7.26   &   52.74 $\pm$ 6.48  \\
& TDGIA & 150/150 &   66.22 $\pm$ 2.31  &   39.66 $\pm$ 15.80 &  69.36 $\pm$ 0.90   &   24.14 $\pm$ 3.21   &   19.36 $\pm$ 2.14   &  26.57 $\pm$ 9.28   &  27.51 $\pm$ 6.69   &   59.01 $\pm$ 3.93  \\
& TDGIA & 200/200 &   64.66 $\pm$ 2.15  &   38.48 $\pm$ 9.63  &  69.91 $\pm$ 1.54   &   26.49 $\pm$ 4.14   &   22.33 $\pm$ 11.29   &  26.72 $\pm$ 13.41   &  26.73 $\pm$ 7.47   &   51.72 $\pm$ 5.06  \\
\midrule

\multirow{8}{*}{PubMed} 
& SPEIT & 300/300 &   86.66 $\pm$ 0.68  &   75.54 $\pm$ 0.54  &  84.36 $\pm$ 0.58   &   -   &   40.94 $\pm$ 2.47   &  39.22 $\pm$ 0.00   &  76.71 $\pm$ 0.14   &   77.55 $\pm$ 0.54  \\
& SPEIT & 600/600 &   86.80 $\pm$ 0.98  &   75.71 $\pm$ 0.60  &  86.47 $\pm$ 0.58   &   -   &   34.8 $\pm$ 10.99   &  41.10 $\pm$ 1.63   &  76.36 $\pm$ 0.65   &   77.22 $\pm$ 0.43  \\
& SPEIT & 900/900 &   87.10 $\pm$ 0.38  &   76.00 $\pm$ 0.35  &  86.73 $\pm$ 0.48   &   -   &   39.99 $\pm$ 1.55  &  40.78 $\pm$ 1.77   &  76.73 $\pm$ 0.36   &   77.14 $\pm$ 0.70  \\
& SPEIT & 1200/1200 &   87.51 $\pm$ 0.58  &   74.75 $\pm$ 0.90  &  86.19 $\pm$ 0.40   &   -   &   40.82 $\pm$ 1.85   &  40.08 $\pm$ 1.63   &  76.70 $\pm$ 0.72   &   77.17 $\pm$ 0.32  \\
& TDGIA & 300/300 &   85.56 $\pm$ 0.91  &   75.53 $\pm$ 0.36  &  84.00 $\pm$ 1.12   &   -   &   39.78 $\pm$ 0.29   &  60.40 $\pm$ 11.23   &  77.58 $\pm$ 0.71   &   77.45 $\pm$ 0.68  \\
& TDGIA & 600/600 &   86.33 $\pm$ 0.84  &   75.84 $\pm$ 0.23  &  84.21 $\pm$ 0.33   &   -   &   56.00 $\pm$ 9.13   &  70.24 $\pm$ 20.07   &  76.38 $\pm$ 0.75   &   77.37 $\pm$ 0.31  \\
& TDGIA & 900/900 &   87.00 $\pm$ 1.47  &   75.91 $\pm$ 0.55  &  83.87 $\pm$ 1.01   &   -   &   41.44 $\pm$ 10.61   &  52.40 $\pm$ 6.03   &  75.56 $\pm$ 0.58   &   76.88 $\pm$ 0.68  \\
& TDGIA & 1200/1200 &   86.87 $\pm$ 1.41  &   75.62 $\pm$ 0.82  &  84.35 $\pm$ 2.24   &   -   &   57.16 $\pm$ 10.92   &  71.72 $\pm$ 14.02   &  76.49 $\pm$ 0.80   &   77.23 $\pm$ 0.46  \\
\bottomrule

\end{tabular}}
\end{table}

\subsection{Modification Attacks}

In modification attack, attackers can directly flip the original graph's edges and perturb the features of the nodes. We apply the PGD method to randomly flip edges and then perturb node features. In \cref{tab:mod_diff}, we observe that 
\begin{itemize}
    \item[1)] the robustness performance starts to break down when the 60\% of nodes have their features perturbed by $\epsilon=0.1$ (the value of features is in [-1,1]) and 60\% of edges are flipped, and
    \item[2)] as long as feature perturbation is small, the robustness can still be retained even if 80\% of nodes have their features perturbed and 80\% of edges are flipped.
\end{itemize}
In summary, from those extensive experiment results, we observe that our models are more robust against topology perturbation than feature perturbation. 

\begin{table}[!thb]
\centering
\caption{Node classification accuracy (\%) on adversarial examples generated from PGD under black-box \emph{modification} setting where different number of modified nodes and edges are applied. Experiments are conducted on Cora dataset.} \label{tab:mod_diff} 
\small
\begin{tabular}{cccc} 
\toprule
Ratio of nodes/edges modified & Feature perturbation & Beltrami   \\
\midrule
$20\% / 20\%$  &  $\epsilon=0.01$  &  73.73 $\pm$ 0.86  &    \\
$40\% / 40\%$  &  $\epsilon=0.01$  &  73.28 $\pm$ 1.36  &    \\
$60\% / 60\%$  &  $\epsilon=0.01$  &  72.46 $\pm$ 1.48 &    \\
$80\% / 80\%$  &  $\epsilon=0.01$  & 72.31 $\pm$ 2.46 &    \\
$20\% / 20\%$  &  $\epsilon=0.1$  & 61.56 $\pm$ 1.06 &    \\
$40\% / 40\%$  &  $\epsilon=0.1$  & 52.15 $\pm$ 2.31 &    \\
$60\% / 60\%$  &  $\epsilon=0.1$  & 44.03 $\pm$ 0.79 &    \\
$80\% / 80\%$  &  $\epsilon=0.1$  & 40.30 $\pm$ 0.91 &    \\
$80\% / 80\%$  &  $\epsilon=1$  & 39.55 $\pm$ 3.46 &    \\
$80\% / 80\%$  &  $\epsilon=2$  & 37.09 $\pm$ 3.02 &    \\
$80\% / 80\%$  &  $\epsilon=5$  & 37.16 $\pm$ 1.88 &    \\
$80\% / 80\%$  &  $\epsilon=10$  & 37.16 $\pm$ 0.33  &    \\
\bottomrule

\end{tabular}
\end{table}

\subsection{More Discussion about Neural Heat Diffusion}

"Heat" in \cref{tab:flows} of main paper serves as a baseline. Recall that both "Heat" and GRAND/BLEND are derived from heat flow, where GRAND/BLEND uses the attention function to weigh edges while "Heat" uses the constant function, which is not learned from the data and hence not expected to perform well. In a further experiment, we now treat the constant in "Heat" as a trainable variable which is shared by all edges. We denote this new variant as Heat$^+$ and compare it with GAT and APPNP (which were more robust than "Heat") under SPEIT attack in \cref{tab:heat}. We now see that Heat$^+$ is more robust. This constant variable controls the diffusion diffusivity in graph, analogous to the thermal diffusivity in manifold. It is interesting to observe such a phenomenon since the experiment indicates that heat diffusivity also affects the robustness. We believe further investigations in this direction can be performed in future work.

\begin{table}[!hbt]
\tiny
\centering
\caption{Node classification accuracy (\%) on adversarial examples generated from SPEIT.} \label{tab:heat} 
\makebox[\textwidth][c]{
\begin{tabular}{cccccccccccc} 
\toprule
Dataset & GNN & Clean & 10/10  & 15/15 & 20/20 &  25/25 & 30/30 & 35/35 & 40/40 & 45/45 & 50/50 \\
\midrule

\multirow{3}{*}{Citeseer} 
& Heat$^+$ &  69.78 $\pm$ 0.95 &  69.09 $\pm$ 1.61  &   67.77 $\pm$ 1.61   &   66.27 $\pm$ 2.41 & 66.27 $\pm$ 2.81 & 64.26 $\pm$ 1.27 & 61.88 $\pm$ 3.01 & 57.30 $\pm$ 4.07 & 54.61 $\pm$ 5.25 & 54.61 $\pm$ 2.92 \\
& GAT & 69.81 $\pm$ 1.43   &  60.56 $\pm$ 4.77  & 62.01 $\pm$ 4.89  & 45.52 $\pm$ 10.44 & 30.34 $\pm$ 5.40 & 31.16 $\pm$ 6.16 & 31.03 $\pm$ 12.60 & 36.11 $\pm$ 13.85 & 23.13 $\pm$ 3.25 &  26.00 $\pm$ 11.14   \\
& APPNP &  70.75 $\pm$ 0.86   &  68.65 $\pm$ 0.91  & 66.52 $\pm$ 2.90  & 59.87 $\pm$ 3.49 & 56.11 $\pm$ 9.70 & 56.05 $\pm$ 2.37 & 51.22 $\pm$ 8.88 & 38.50 $\pm$ 8.75 & 31.60 $\pm$ 9.80 &  22.19 $\pm$ 0.86  \\
\bottomrule
\end{tabular}}
\end{table}

\section{Further Experiments}\label{sec:supp_more_exp}

We repeat the experiments in \cref{tab:adv} in the main paper but using the three new datasets. The inherent robustness of the proposed graph neural PDEs is again verified by \cref{tab:adv2_rev}, especially on the Flickr and Coauthor datasets.

\begin{table}[!tbh]
\centering
\caption{Node classification accuracy (\%) on adversarial examples generated by SPEIT method. We denote those experiments that are computationally too heavy to run by ``-''.  The best and the second-best result for each criterion are highlighted in \red{red} and \blue{blue}, respectively. } \label{tab:adv2_rev} 
\tiny
\makebox[\textwidth][c]{
\begin{tabular}{cccccccccc} 
\toprule
Dataset & Attack & Beltrami  & RobustGCN & GNNGuard & GCNSVD  & GAT & GraphSAGE & GIN & APPNP \\
\midrule
\multirow{2}{*}{Flickr} 
& \emph{clean} & 49.40  $\pm$  0.12  & 47.66  $\pm$  0.00  & -  & -  & \red{54.45  $\pm$  0.57}  & 53.50  $\pm$  0.02  & 53.57  $\pm$  0.29  & \blue{54.08  $\pm$  0.14}  \\
& SPEIT & \red{49.79  $\pm$  0.68}  & 6.57  $\pm$  0.00  & -  & -   & 6.57  $\pm$  0.00  & \blue{49.71  $\pm$  0.00}  & 49.71  $\pm$  0.00  & 6.57  $\pm$  0.00  \\
& TDGIA & 49.47  $\pm$  0.50  & \blue{52.95  $\pm$  1.58}  & -  & -   & 50.38  $\pm$  0.20  & 50.21 $\pm$ 0.11  & 50.14  $\pm$  0.25  & \red{54.28  $\pm$  0.59}  \\
\midrule

\multirow{2}{*}{Coauthor} 
& \emph{clean} & \red{95.83  $\pm$  0.30}  & 87.75  $\pm$  0.23 & 92.56   $\pm$  0.16  & -   & 92.75  $\pm$  0.15 & \blue{94.53  $\pm$  0.21} & 84.91   $\pm$  0.32  & 87.67  $\pm$  0.16 \\
& SPEIT & \red{94.83   $\pm$  0.12}  & 87.62  $\pm$  0.29 & \blue{92.56   $\pm$  0.16} & -   & 2.59  $\pm$  1.46  & 39.44  $\pm$  13.97 & 39.44  $\pm$  13.97  & 87.66  $\pm$   0.16  \\
& TDGIA & \red{95.06   $\pm$  0.21}  & 87.3  $\pm$  0.29  & -  & -   & 65.32  $\pm$  13.04  & \blue{87.97  $\pm$  3.72}  & 85.12  $\pm$  0.33  & 87.54  $\pm$  0.13  \\
\midrule

\multirow{2}{*}{Amazon Computer} 
& \emph{clean} & 87.86  $\pm$  0.30  & 86.22  $\pm$  0.54  & 88.77  $\pm$  0.05  & 74.79  $\pm$  0.68  & \blue{89.21  $\pm$  0.60}  & \red{89.92  $\pm$  0.33}  & 86.44  $\pm$  0.23  & 82.66  $\pm$  1.54 \\
& SPEIT & 84.90   $\pm$  0.61  & 86.33  $\pm$  0.62 & \red{88.62  $\pm$  0.05}  & 26.79  $\pm$  1.25  & 26.88  $\pm$  16.68  & 29.19  $\pm$  9.35  & \blue{86.44  $\pm$  0.23}  & 82.62  $\pm$  1.55 \\
& TDGIA & 85.50   $\pm$  0.15  & \red{86.69  $\pm$  0.51}  & -  & -   & 55.45  $\pm$  23.07  & 63.14  $\pm$  10.59  & \blue{86.65  $\pm$  0.57}  & 83.44  $\pm$  1.59  \\
\bottomrule
\end{tabular}}
\end{table}

\section{Further Ablation Studies}
\label{sec:supp_ablation}

We repeat the experiments in \cref{tab:flows} of the main paper but using three new datasets. Similar to what we have stated in the main paper, we observe the advantage of the proposed mean curvature flow and Beltrami flow in terms of robustness against adversarial attacks.  

\begin{table}[!tbh]
\centering
\caption{Node classification accuracy (\%) on adversarial examples using graph neural PDEs induced from different flows, where implicit Adam PDE solver with step size 2 is used.  } \label{tab:flows2}
\small
\begin{tabular}{cccccc} 
\toprule
Dataset & Attack & Beltrami & Mean Curvature & GRAND/BLEND & Heat \\
\midrule
\multirow{3}{*}{Coauthor} 
& \emph{clean} & \red{95.81  $\pm$ 0.38} &  \blue{95.66  $\pm$ 0.18}   &  94.35  $\pm$ 0.33   &   92.87  $\pm$  0.55 \\
& SPEIT        & \blue{95.10  $\pm$ 0.35} &  \red{95.41  $\pm$ 0.28}   &  66.63  $\pm$ 9.76   &   34.18  $\pm$  8.85  \\
& TDGIA        & \blue{95.06  $\pm$ 0.21} &  \red{95.41  $\pm$ 0.17}   &  85.97  $\pm$ 4.50   &   62.39  $\pm$  15.96   \\
\midrule

\multirow{3}{*}{Amazon Computer} 
& \emph{clean} & 87.86  $\pm$ 0.30  &  87.59  $\pm$ 0.44   &  \blue{90.59  $\pm$  0.35}  &  \red{90.59   $\pm$ 0.57}  \\
& SPEIT        & \blue{84.90  $\pm$ 0.61}  &  \red{84.55  $\pm$ 1.03}   &  75.91  $\pm$  12.22  &  46.82   $\pm$ 14.42   \\
& TDGIA        & \blue{85.50  $\pm$ 0.15}  &  85.34  $\pm$ 0.78   &  \red{86.31  $\pm$  2.50}  &  76.60   $\pm$  10.48   \\
\midrule

\multirow{3}{*}{Flickr} 
& \emph{clean} & \red{49.42  $\pm$ 0.12} &  47.66  $\pm$  1.51  &  \blue{48.52  $\pm$  0.19}  &   46.65  $\pm$ 1.30  \\
& SPEIT        & \red{49.76  $\pm$ 0.63} &  49.74  $\pm$  1.18  &  \blue{49.75  $\pm$  0.06}  &   49.70  $\pm$ 0.02   \\
& TDGIA        & \red{49.47  $\pm$ 0.50} &  47.39  $\pm$  1.43  &  \blue{48.63  $\pm$  0.09}  &   47.05  $\pm$ 1.20    \\

\bottomrule
\end{tabular}
\end{table}

\section{Extension}\label{sec:supp_extension} 
The graph Laplacian and curvature can be generalized to an operator that can be thought of as the discrete analogue of the $p$-Laplacian in the continuous case \cite{zhou2005regularization,dibenedetto2011harnack}: 
\begin{align}\label{eqn:pLap}
    \frac{\partial \varphi(u,t)}{\partial t}=\frac{1}{2}{\rm div}\left(\|\nabla \varphi\|^{p-2}\nabla \varphi\right)(u,t),
\end{align}
where $p=1,2$ correspond to the mean curvature and heat (GRAND/BLEND) equations, respectively. Here, we consider the cases where $p>2$. Note that when $p>2$, as opposed to the cases where $p\leq 1$, e.g., $p=1$ for mean curvature and Beltrami flows, the edges that connect nodes with similar features are potentially preserved in the diffusion process. \cref{tab:pLap} shows that the PDE's robustness decreases as $p$ increases.

\begin{table}[!tbh]
\centering
\caption{Node classification accuracy (\%) on adversarial examples using graph neural PDEs induced from $p$-Laplacian flow where $p=3,4$, where implicit Adam PDE solver with step size 2 is used. All experiments are done on Citeseer dataset. } \label{tab:pLap}
\small
\begin{tabular}{ccccc} 
\toprule
Attack & $p$-Laplacian $(p=3)$ & $p$-Laplacian $(p=4)$ & Mean Curvature & Beltrami \\
\midrule
\emph{clean} & 69.15 $\pm$ 1.37  &  67.27 $\pm$ 1.14  & \red{70.50 $\pm$ 1.63}  &  \blue{70.41 $\pm$ 1.38} \\
SPEIT        & 63.76 $\pm$ 0.85  &  62.44 $\pm$ 1.14  & \blue{65.39 $\pm$ 1.62}  &  \red{65.52 $\pm$ 2.26}  \\
TDGIA        & 64.89 $\pm$ 0.77  &  63.39 $\pm$ 2.00  & \red{66.83 $\pm$ 1.59}  &  \blue{65.77 $\pm$ 1.28}   \\

\bottomrule
\end{tabular}
\end{table}

\section{Implementation Details}\label{sec:supp_imp_det}
By default, all the models are implemented using three layers, with layer normalization, 50\% dropout at the end of each layer, and hidden feature dimensions $64-64$. Here are some notes that should be taken:
\begin{itemize}
    \item When dealing with the Cora dataset, PDEs induced by Beltrami and mean curvature flows tend to underfit. Thus, we implement them using four layers with hidden feature dimensions $128-128-128$.
    
    \item For all neural PDEs, node features are diffused independently over layers, i.e., each layer solves its own PDE. At each layer, once a diffusion process is over, the parameters in the associated PDE are reset. This operation is effective to alleviate overfitting problem.
    \item By default, the integral period for all PDEs is set to $[0,1]$.
\end{itemize}
All experiments are repeated 5 to 10 times with different random seeds.

For all the attacks, the maximum number of nodes and edges that are allowed to be perturbed are summarized in \cref{tab:attack_max_min}.
\begin{table}[!tbh]
\centering
\caption{Statistic of attacks' budgets on each dataset} \label{tab:attack_max_min} 
\small
\begin{tabular}{ccc} 
\toprule
Dataset & max \# Nodes & max \# Edges \\
\midrule

 Cora &  50  & 50   \\
\midrule

 Citeseer &  50  & 50   \\
\midrule

 PubMed &  300  & 300   \\
\midrule

  Coauthor &  150  & 300   \\
 \midrule
 
  Amazon Computer &  100  & 200 \\
 \midrule

 Flickr &  1000  & 5000 \\

\bottomrule
\end{tabular}
\end{table}

GNNGuard implemented in \cref{tab:adv} of the main paper is based on GCN, where each layer in GNNGuard contains two operations: 1) adjusting attention coefficients by pruning likely fake edges and assigning less weight to suspicious edges; and 2) a normal GCN layer. Regarding BeltramiGuard implemented in \cref{tab:BelGuard} of the main paper, we replace the second component as mentioned above, i.e., a GCN layer, with a PDE layer induced by Beltrami flow. 

\section{Proofs of Results}\label{sec:supp_proof}

In this section, we provide detailed proofs of the results stated in the main paper. The time-variant analogy of \cite{zheng1999stability} as discussed after \cref{eq:heat_cauchy2} in the main paper is also presented.

\subsection{Proof of \cref{lem:linear_graph}}
\label{sect:proof_thm:reg_simplex}

For the system $ \frac{\partial \varphi(u,t)}{\partial t}= - \Delta \varphi(u,t)$, the solution is given by  $\varphi(u,t)=e^{-t\Delta}\varphi(u,0)$. We therefore need to derive a bound for  $\vertiii{e^{-t\Delta}-e^{-t\tilde{\Delta}}}$. Note for matrix exponent, in general $e^{X+Y}\ne e^{X} e^{Y}$ unless $X$ and $Y$ commute (i.e. $XY=YX$). However, from \cite[eq (3.5) ]{van1977sensitivity}, we have 
\begin{align*}
    \vertiii*{e^{-t\Delta}-e^{-t\tilde{\Delta}}}
    &\le t\vertiii*{\Delta - \tilde{\Delta}}\vertiii*{e^{-t\Delta}}e^{\vertiii{t(\Delta - \tilde{\Delta})}},
\end{align*}
where the positive definiteness of $\Delta$ is used.
We further analyze $\Delta - \tilde{\Delta}$. Since $\tilde{\bW}=\bW+ \bE$ with $\vertiii{\bE}=\varepsilon$, we have $\vertiii{\bD - \tilde{\bD}}= O(\varepsilon)$ because norms for a finite dimensional space are equivalent \cite[Theorem 5.4.4.]{horn2012matrix}. Let $\bE'=  \tilde{\bD}^{-1/2} - \bD^{-1/2}$. We also have $\vertiii{\bE'}=O(\varepsilon)$.

It follows that
\begin{align*}
        \vertiii*{\Delta - \tilde{\Delta}} 
        & = \vertiii*{\bD^{-1/2}\bW\bD^{-1/2} - \tilde{\bD}^{-1/2} \tilde{\bW} \tilde{\bD}^{-1/2}}\\
        & =  \vertiii*{\bD^{-1/2}\bW\bD^{-1/2} - (\bD^{-1/2}+\bE')(\bW+\bE) (\bD^{-1/2}+\bE')}\\
        & = \vertiii*{\bE'(\bW+\bE')(\bD^{-1/2}+\bE')+\bD^{-1/2}\bE(\bD^{-1/2}+\bE')+\bD^{-1/2}\bW\bE' }\\
        & = O(\varepsilon).
\end{align*}
We therefore obtain the conclusion that $\vertiii{e^{-t\Delta}-e^{-t\tilde{\Delta}}}=O(\varepsilon te^{-\rho t})$ for some constant $\rho>0$ since $\vertiii{e^{-t\Delta}}=O(e^{-t\rho})$ according to \cite{van1977sensitivity} and the fact that $\Delta$ is positive definite according to our assumption. The conclusion $\|\varphi(u,t)-\tilde{\varphi}(u,t)\|= O(\varepsilon te^{-\rho t})$ then follows.

\subsection{Proof of \cref{lem:lyap_stable}}
\begin{proof}
Since $\bA$ is right stochastic, all eigenvalues of $\bA - \bI$ have non-positive real parts with module $\le 1$, the stability of \cref{eqn:mtx_heat} is ensured. Since $\bPsi^{-1}\bA\odot \bB$ is right stochastic, all eigenvalues of $\bPsi^{-1}\bA\odot \bB - \bI$ have non-positive parts with module $\le 1$ and \cref{eqn:mtx_beltrami} are stable.
\end{proof}

\subsection{The Time-Variant Case}
In this section, we provide an analysis of the time-variant system as alluded to before \cref{lem:linear_graph}. 
In the time-variant system, we have
\begin{align}
    \frac{\partial \varphi(u,t)}{\partial t}= -\Delta(t) \varphi(u,t), \label{eq:difad}
\end{align}
where $\Delta(t)$ is a time-variant Laplacian operator.
Let the state transition matrix be $\Phi_{\Delta}\left(t ; 0\right)$ so that the solution of \cref{eq:difad} is given by $\varphi(u, t)=\Phi_{\Delta}\left(t ; 0\right)\varphi(u, 0)$ \cite{chen1999linear}.

In \cite{hinrichsen1989robustness}, the authors discuss the stability radius of time-variant systems. The system still preserves stability if the perturbation of $\Delta(t)$ is smaller than a stability radius.
In this discussion, we assume the perturbation is small, i.e., it is inside the stability radius, so that both $\Delta(t)$ and its perturbed version $\tilde{\Delta}(t)$  generate exponentially stable solutions. In other words, for a given matrix norm $\vertiii{}$, there exist positive constants $M$, $\tilde{M}$, $\omega$ and $\tilde{\omega}$ such that  
\begin{align}
\vertiii*{\Phi_{\Delta}(t, s)} \le M e^{-\omega(t-s)}, \text{ and } \vertiii*{\Phi_{\tilde{\Delta}}(t, s)} \le \tilde{M} e^{-\tilde{\omega}(t-s)}, \quad t \ge s \ge 0. \label{eq:assump1}
\end{align}
To simplify the analysis, we further assume the perturbation is small enough such that the exponent difference is also small, i.e.,
\begin{align}
   | \omega - \tilde{\omega}| < \omega.\label{eq:assump2}
\end{align}
We now consider the solution difference after perturbation. 
Let $\bD(t)$ and $\bW(t)$ be the time-variant version of $\bD$ and $\bW$, respectively. They are assumed to satisfy \cref{eq:assump1} and \cref{eq:assump2}.

\begin{Lemma}\label{lem:linear_graph_timevar}
 Let $\Delta(t) = \bD^{-1/2}(t)(\bD(t) - \bW(t))\bD^{-1/2}(t)$ and $\tilde{\Delta}(t)=\tilde{\bD}^{-1/2}(t)(\tilde{\bD}(t) - \tilde{\bW}(t))\tilde{\bD}^{-1/2}(t)$ with $\tilde{\bD}(t)$ being the diagonal degree matrix for $\tilde{\bW}(t)=\bW(t)+ \bE(t)$, where $\set{\tilde{\bW}(t) \given t \geq0}$ satisfies \cref{eq:assump1} and \cref{eq:assump2}. Denote $\varepsilon(t)=\vertiii{\bE(t)}$.
 We have $\|\varphi(u,t)-\tilde{\varphi}(u,t)\|= O\parens*{e^{-\rho t} \int_0^{t}\varepsilon(\tau)\ud \tau }$ for some constant $\rho>0$.
\end{Lemma}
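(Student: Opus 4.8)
The plan is to run the argument of \cref{lem:linear_graph} with the matrix exponential $e^{-t\Delta}$ replaced by the state transition matrix $\Phi_{\Delta}(t;s)$, and with the perturbation inequality of \cite{van1977sensitivity} replaced by the variation-of-constants (Duhamel) identity for transition matrices. Writing the two flows as $\varphi(u,t)=\Phi_{\Delta}(t;0)\varphi(u,0)$ and $\tilde\varphi(u,t)=\Phi_{\tilde\Delta}(t;0)\varphi(u,0)$ (both well defined since $\bD(t),\tilde\bD(t)$ are non-singular), I would differentiate $s\mapsto \Phi_{\Delta}(t;s)\Phi_{\tilde\Delta}(s;0)$ using $\partial_s\Phi_{\Delta}(t;s)=\Phi_{\Delta}(t;s)\Delta(s)$ and $\partial_s\Phi_{\tilde\Delta}(s;0)=-\tilde\Delta(s)\Phi_{\tilde\Delta}(s;0)$, then integrate over $[0,t]$ to obtain
\begin{align*}
 \Phi_{\tilde\Delta}(t;0)-\Phi_{\Delta}(t;0)=\int_{0}^{t}\Phi_{\Delta}(t;s)\bigl(\Delta(s)-\tilde\Delta(s)\bigr)\Phi_{\tilde\Delta}(s;0)\ud s .
\end{align*}
Applying this to $\varphi(u,0)$, it then suffices to bound the norm of the integrand.

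For the middle factor I would repeat the algebra from the proof of \cref{lem:linear_graph}, but frozen at each time $s$: with $\bE'(s)\coloneqq\tilde\bD^{-1/2}(s)-\bD^{-1/2}(s)$ one has $\vertiii{\bE'(s)}=O(\varepsilon(s))$ (norms being equivalent in finite dimension), and expanding $\tilde\Delta(s)=(\bD^{-1/2}(s)+\bE'(s))(\bW(s)+\bE(s))(\bD^{-1/2}(s)+\bE'(s))$ gives $\vertiii{\Delta(s)-\tilde\Delta(s)}=O(\varepsilon(s))$. The implied constant is uniform in $s$ provided $\{\bW(t)\}$ is bounded and $\{\bD(t)\}$ is uniformly non-singular, which is part of the standing assumptions behind \cref{eq:assump1}. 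For the outer factors I would insert the exponential bounds \cref{eq:assump1}, so that the integrand has norm at most $M\tilde M\,e^{-\omega(t-s)}e^{-\tilde\omega s}\cdot O(\varepsilon(s))$.

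The only genuinely new point is to collapse $e^{-\omega(t-s)}e^{-\tilde\omega s}$ into a single decaying exponential independent of $s\in[0,t]$. Here \cref{eq:assump2} enters: it forces $\tilde\omega\in(0,2\omega)$, so $\rho\coloneqq\min(\omega,\tilde\omega)>0$, and one checks $e^{-\omega(t-s)}e^{-\tilde\omega s}\le e^{-\rho t}$ for all $0\le s\le t$ --- if $\tilde\omega\ge\omega$ then $e^{-\tilde\omega s}\le e^{-\omega s}$ and the product is $\le e^{-\omega t}$, while if $\tilde\omega<\omega$ then $e^{-\omega(t-s)}e^{-\tilde\omega s}=e^{-\omega t}e^{(\omega-\tilde\omega)s}\le e^{-\omega t}e^{(\omega-\tilde\omega)t}=e^{-\tilde\omega t}$. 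Pulling $e^{-\rho t}$ out of the integral then yields $\vertiii{\Phi_{\tilde\Delta}(t;0)-\Phi_{\Delta}(t;0)}=O\bigl(e^{-\rho t}\int_0^t\varepsilon(\tau)\ud\tau\bigr)$, and hence the claimed bound on $\|\varphi(u,t)-\tilde\varphi(u,t)\|$.

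I do not expect a serious obstacle: the estimate is routine once the Duhamel identity is in place. The two places needing care are (i) getting the signs right in the transition-matrix identity, so that the middle factor is $\Delta(s)-\tilde\Delta(s)$, exactly mirroring the $e^{-t\Delta}-e^{-t\tilde\Delta}$ structure of \cref{lem:linear_graph}; and (ii) verifying that the constant in $\vertiii{\Delta(s)-\tilde\Delta(s)}=O(\varepsilon(s))$ is uniform in $s$, which requires the degree matrices $\bD(t)$ to be bounded away from singularity uniformly in time rather than merely pointwise.
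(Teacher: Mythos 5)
Your proposal is correct and follows essentially the same route as the paper: the paper also reduces the problem to the identity $\tilde\varphi(u,t)-\varphi(u,t)=\int_0^t\Phi_{\Delta}(t,\tau)\bigl(\Delta(\tau)-\tilde\Delta(\tau)\bigr)\Phi_{\tilde\Delta}(\tau,0)\varphi(u,0)\ud\tau$ (obtained there via the variation-of-parameters formula for the ``feedback'' term rather than by differentiating $s\mapsto\Phi_{\Delta}(t;s)\Phi_{\tilde\Delta}(s;0)$, which is the same computation), reuses the $O(\varepsilon(\tau))$ bound on $\Delta(\tau)-\tilde\Delta(\tau)$ from the time-invariant lemma, and inserts the exponential bounds \cref{eq:assump1}--\cref{eq:assump2}. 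Your explicit verification that $e^{-\omega(t-s)}e^{-\tilde\omega s}\le e^{-\rho t}$ with $\rho=\min(\omega,\tilde\omega)$ fills in a step the paper leaves implicit.
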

\begin{proof}
 For the perturbed system, we have
 \begin{align}
     \frac{\partial \varphi(u,t)}{\partial t}& = - \tilde{\Delta}(t) \varphi(u,t) \nn
     & = - \Delta(t) \varphi(u,t) + \left(\Delta(t) -\tilde{\Delta}(t) \right)\varphi(u,t). \label{eq:tefdad}
 \end{align}
Formally, \cref{eq:tefdad} can be interpreted as a closed loop system obtained by applying the dynamical feedback $\left(\Delta(t) -\tilde{\Delta}(t) \right)\varphi(u,t)$. According to \cite{chen1999linear}, the solution of the above system, denoted by $\tilde{\varphi}(u,t)$, satisfies
\begin{align*}
    \tilde{\varphi}(u,t) & = \Phi_{\Delta}\left(t, 0\right) \varphi(u,0) + \int_{0}^{t} \Phi_{\Delta}(t, \tau)\left(\Delta(\tau) -\tilde{\Delta}(\tau)\right) \tilde{\varphi}(u,\tau)  \ud \tau \nn
    & = \varphi(u, t) + \int_{0}^{t} \Phi_{\Delta}(t, \tau)\left(\Delta(\tau) -\tilde{\Delta}(\tau)\right) \tilde{\varphi}(u,\tau)  \ud \tau.
\end{align*}
Similar to the proof in \cref{sect:proof_thm:reg_simplex} for the time-invariant case, we have $\vertiii{\Delta(\tau) -\tilde{\Delta}(\tau)} = O(\varepsilon(t))$. It follows that 
\begin{align*}
     \|\tilde{\varphi}(u,t) - \varphi(u, t)\| & = \norm*{\int_{0}^{t} \Phi_{\Delta}(t, \tau)\left(\Delta(\tau) -\tilde{\Delta}(\tau)\right) \tilde{\varphi}(u,\tau)  \ud \tau} \nn
    & =  \norm*{\int_{0}^{t} \Phi_{\Delta}(t, \tau)\left(\Delta(\tau) -\tilde{\Delta}(\tau)\right) \Phi_{\tilde{\Delta}}(\tau, 0) \varphi(u,0) \ud \tau} \nn
       & \le \int_{0}^{t} \vertiii*{\Phi_{\Delta}(t, \tau)}\vertiii*{\Delta(\tau) -\tilde{\Delta}(\tau)} \vertiii*{\Phi_{\tilde{\Delta}}(\tau, 0)} \ud \tau\cdot \|\varphi(u,0) \|   \nn
     & \le \int_{0}^{t} M e^{-\omega(t-\tau)} \tilde{M} e^{-\tilde{\omega}(\tau-0)} \varepsilon(\tau) \ud\tau \cdot \|\varphi(u,0) \|   \nn
     & =  O\parens*{e^{-\rho t} \int_0^{t}\varepsilon(\tau)\ud\tau}
\end{align*}
for some constant $\rho>0$. In the second equality we have used the fact that the initial point for both the unperturbed and perturbed system is $ \varphi(u,0) $, and the last equality follows from \cref{eq:assump1} and \cref{eq:assump2}.
\end{proof}

\begin{Remark_A}
For the sake of simplicity, in this work, we only present theoretical analysis for the time-invariant or time-variant graph Laplacian in \cref{lem:linear_graph} and \cref{lem:linear_graph_timevar} for heat diffusion \cref{eq:heatgraph}. However the more complicated case where the graph Laplacian $\Delta$ depends on node features $u$ as implemented in \cref{eqn:w} is left for future work. From experiment results, for example, the results shown in \cref{tab:flows} in the main paper and \cref{tab:flows2} in this supplementary material, the time-invariant case already preserve some robustness as compared to non-PDE GNNs in \cref{tab:adv} and \cref{tab:adv2_rev}, which validates our theoretical analysis. The more robust mean curvature flow and Beltrami flow proposed in the paper are highly non-linear, making a theoretical analysis difficult. However, our analysis for the time-variant case provides some insights as to why robustness is present in these cases, as demonstrated in our experiments.

\end{Remark_A}

\section*{Broader Impact}
Our work develops robust GNNs to mitigate the threat of adversarial attacks, which can lead to reliable deployment of automation in various applications like in sensor networks, transportation networks, and manufacturing. This may potentially lead to the replacement of repetitive tasks or jobs that are traditionally performed by humans. However, automation and artificial intelligence (AI) can lead to better productivity, efficiency and cost-effectiveness with an overall increase in societal living standards. By incorporating the ability to defend against adversarial attacks, our research can lead to more secure and robust adoption of AI technologies. However, potential failures and engineering issues remain challenging open problems.

\section*{Acknowledgement}
This research is supported by the Singapore Ministry of Education Academic Research Fund Tier 2 grant MOE-T2EP20220-0002 and A*STAR under its RIE2020 Advanced Manufacturing and Engineering (AME) Industry Alignment Fund – Pre Positioning (IAF-PP) (Grant No. A19D6a0053).

\bibliographystyle{IEEEtran}
\bibliography{IEEEabrv,StringDefinitions,adv_dnn}

\begin{thebibliography}{10}
\providecommand{\url}[1]{#1}
\csname url@samestyle\endcsname
\providecommand{\newblock}{\relax}
\providecommand{\bibinfo}[2]{#2}
\providecommand{\BIBentrySTDinterwordspacing}{\spaceskip=0pt\relax}
\providecommand{\BIBentryALTinterwordstretchfactor}{4}
\providecommand{\BIBentryALTinterwordspacing}{\spaceskip=\fontdimen2\font plus
\BIBentryALTinterwordstretchfactor\fontdimen3\font minus
  \fontdimen4\font\relax}
\providecommand{\BIBforeignlanguage}[2]{{%
\expandafter\ifx\csname l@#1\endcsname\relax
\typeout{** WARNING: IEEEtran.bst: No hyphenation pattern has been}%
\typeout{** loaded for the language `#1'. Using the pattern for}%
\typeout{** the default language instead.}%
\else
\language=\csname l@#1\endcsname
\fi
#2}}
\providecommand{\BIBdecl}{\relax}
\BIBdecl

\bibitem{yueBio2019}
X.~Yue, Z.~Wang, J.~Huang, S.~Parthasarathy, S.~Moosavinasab, Y.~Huang, S.~M.
  Lin, W.~Zhang, P.~Zhang, and H.~Sun, ``Graph embedding on biomedical
  networks: methods, applications and evaluations,'' \emph{Bioinformatics},
  vol.~36, no.~4, pp. 1241--1251, 2019.

\bibitem{AshoorNC2020}
H.~Ashoor, X.~Chen, W.~Rosikiewicz, J.~Wang, A.~Cheng, P.~Wang, Y.~Ruan, and
  S.~Li, ``Graph embedding and unsupervised learning predict genomic
  sub-compartments from hic chromatin interaction data,'' \emph{Nat. Commun.},
  vol.~11, 2020.

\bibitem{kipf2017semi}
T.~N. Kipf and M.~Welling, ``Semi-supervised classification with graph
  convolutional networks,'' in \emph{Proc. Int. Conf. Learn. Representations},
  2017.

\bibitem{ZhangTKDE2022}
Z.~Zhang, P.~Cui, and W.~Zhu, ``Deep learning on graphs: A survey,''
  \emph{{IEEE} Trans. Knowl. Data Eng.}, vol.~34, no.~1, pp. 249--270, Jan
  2022.

\bibitem{WuTNNLS2021}
Z.~Wu, S.~Pan, F.~Chen, G.~Long, C.~Zhang, and P.~S. Yu, ``A comprehensive
  survey on graph neural networks,'' \emph{{IEEE} Trans. Neural Netw. Learn.
  Syst.}, vol.~32, no.~1, pp. 4--24, 2021.

\bibitem{kipf2016vgae}
T.~N. Kipf and M.~Welling, ``Variational graph auto-encoders,'' in \emph{Proc.
  Advances Neural Inf. Process. Syst. Workshop}, 2016.

\bibitem{YingKDD2018}
R.~Ying, R.~He, K.~Chen, P.~Eksombatchai, W.~L. Hamilton, and J.~Leskovec,
  ``Graph convolutional neural networks for web-scale recommender systems,'' in
  \emph{Proc. Int. Conf. Knowledge Discovery and Data Mining}, 2018.

\bibitem{GilmerICML2017}
J.~Gilmer, S.~S. Schoenholz, P.~F. Riley, O.~Vinyals, and G.~E. Dahl, ``Neural
  message passing for quantum chemistry,'' in \emph{Proc. Int. Conf. Mach.
  Learn.}, 2017.

\bibitem{zugnerKDD2018}
D.~Z\"{u}gner, A.~Akbarnejad, and S.~G\"{u}nnemann, ``Adversarial attacks on
  neural networks for graph data,'' in \emph{Proc. Int. Conf. Knowl. Discovery
  Data Mining}, 2018.

\bibitem{Chen2018FastGA}
J.~Chen, Y.~Wu, X.~Xu, Y.~Chen, H.~Zheng, and Q.~Xuan, ``Fast gradient attack
  on network embedding,'' \emph{ArXiv}, 2018.

\bibitem{WaniekNHB2018}
M.~Waniek, T.~P. Michalak, M.~J. Wooldridge, and T.~Rahwan, ``Hiding
  individuals and communities in a social network,'' \emph{Nature Human
  Behaviour}, vol.~2, no.~1, pp. 139--147, 2018.

\bibitem{Du2017TopologyAG}
J.~Du, S.~Zhang, G.~Wu, J.~M.~F. Moura, and S.~Kar, ``Topology adaptive graph
  convolutional networks,'' \emph{ArXiv}, vol. abs/1710.10370, 2017.

\bibitem{Wang2020ScalableAO}
J.~Wang, M.~Luo, F.~Suya, J.~Li, Z.~Yang, and Q.~Zheng, ``Scalable attack on
  graph data by injecting vicious nodes,'' \emph{Data Mining Knowl. Discovery},
  pp. 1 -- 27, 2020.

\bibitem{speit_attack}
\BIBentryALTinterwordspacing
Q.~Zheng, Y.~Fei, Y.~Li, Q.~Liu, M.~Hu, and Q.~Sun. Kdd cup 2020 ml track 2
  adversarial attacks and defense on academic graph 1st place solution.
  Accessed: May 1, 2022. [Online]. Available:
  \url{https://github.com/Stanislas0/KDD_CUP_2020_MLTrack2_SPEIT}
\BIBentrySTDinterwordspacing

\bibitem{zouKDD2021}
X.~Zou, Q.~Zheng, Y.~Dong, X.~Guan, E.~Kharlamov, J.~Lu, and J.~Tang, ``Tdgia:
  Effective injection attacks on graph neural networks,'' in \emph{Proc. Int.
  Conf. Knowl. Discovery Data Mining}, 2021, p. 2461–2471.

\bibitem{zugner_adversarial_2019}
D.~Z{\"u}gner and S.~G{\"u}nnemann, ``Adversarial attacks on graph neural
  networks via meta learning,'' in \emph{Proc. Int. Conf. Learn.
  Representations}, 2019.

\bibitem{maKDD2021}
Y.~Ma, S.~Wang, T.~Derr, L.~Wu, and J.~Tang, ``Graph adversarial attack via
  rewiring,'' in \emph{Proc. Int. Conf. Knowl. Discovery Data Mining}, 2021, p.
  1161–1169.

\bibitem{sunWWW2020}
Y.~Sun, S.~Wang, X.~Tang, T.-Y. Hsieh, and V.~Honavar, ``Adversarial attacks on
  graph neural networks via node injections: A hierarchical reinforcement
  learning approach,'' in \emph{Proc. Web Conf.}, 2020, p. 673–683.

\bibitem{zhuKDD2019}
D.~Zhu, Z.~Zhang, P.~Cui, and W.~Zhu, ``Robust graph convolutional networks
  against adversarial attacks,'' in \emph{Proc. Int. Conf. Knowl. Discovery
  Data Mining}, 2019, p. 1399–1407.

\bibitem{FengNeurips2020}
W.~Feng, J.~Zhang, Y.~Dong, Y.~Han, H.~Luan, Q.~Xu, Q.~Yang, E.~Kharlamov, and
  J.~Tang, ``Graph random neural networks for semi-supervised learning on
  graphs,'' in \emph{Proc. Advances Neural Inf. Process. Syst.}, 2020.

\bibitem{JinKDD2020}
W.~Jin, Y.~Ma, X.~Liu, X.~Tang, S.~Wang, and J.~Tang, ``Graph structure
  learning for robust graph neural networks,'' in \emph{Proc. Int. Conf. Knowl.
  Discovery Data Mining}, 2020, p. 66–74.

\bibitem{EntezariWSDM2020}
N.~Entezari, S.~A. Al-Sayouri, A.~Darvishzadeh, and E.~E. Papalexakis, ``All
  you need is low (rank): Defending against adversarial attacks on graphs,'' in
  \emph{Proc. Int. Conf. Web Search Data Mining}, 2020, p. 169–177.

\bibitem{zhangNeurips2020}
X.~Zhang and M.~Zitnik, ``Gnnguard: Defending graph neural networks against
  adversarial attacks,'' in \emph{Proc. Advances Neural Inf. Process. Syst.},
  2020.

\bibitem{yan2019robustness}
H.~Yan, J.~Du, V.~Y. Tan, and J.~Feng, ``On robustness of neural ordinary
  differential equations,'' in \emph{Proc. Advances Neural Inf. Process.
  Syst.}, 2018, pp. 1--13.

\bibitem{haber2017stable}
E.~Haber and L.~Ruthotto, ``Stable architectures for deep neural networks,''
  \emph{Inverse Problems}, vol.~34, no.~1, pp. 1--23, Dec. 2017.

\bibitem{liu2020does}
X.~Liu, S.~Si, Q.~Cao, S.~Kumar, and C.-J. Hsieh, ``How does noise help
  robustness? {E}xplanation and exploration under the neural sde framework,''
  in \emph{Proc. Conf. Comput. Vision Pattern Recognition}, 2020, pp. 282--290.

\bibitem{kang2021Neurips}
Q.~Kang, Y.~Song, Q.~Ding, and W.~P. Tay, ``Stable neural {ODE} with
  {Lyapunov-stable} equilibrium points for defending against adversarial
  attacks,'' in \emph{Proc. Advances Neural Inf. Process. Syst.}, 2021.

\bibitem{chen2018neural}
R.~T. Chen, Y.~Rubanova, J.~Bettencourt, and D.~Duvenaud, ``Neural ordinary
  differential equations,'' \emph{arXiv preprint arXiv:1806.07366}, 2018.

\bibitem{chamberlain2021grand}
B.~P. Chamberlain, J.~Rowbottom, M.~Goronova, S.~Webb, E.~Rossi, and M.~M.
  Bronstein, ``Grand: Graph neural diffusion,'' in \emph{Proc. Int. Conf. Mach.
  Learn.}, 2021.

\bibitem{chamberlain2021blend}
B.~P. Chamberlain, J.~Rowbottom, D.~Eynard, F.~Di~Giovanni, D.~Xiaowen, and
  M.~M. Bronstein, ``Beltrami flow and neural diffusion on graphs,'' in
  \emph{Proc. Advances Neural Inf. Process. Syst.}, 2021.

\bibitem{grigoryan2009heat}
A.~Grigoryan, \emph{Heat kernel and analysis on manifolds}.\hskip 1em plus
  0.5em minus 0.4em\relax Providence: American Mathematical Soc., 2009.

\bibitem{saloff1992uniformly}
L.~Saloff-Coste, ``Uniformly elliptic operators on riemannian manifolds,''
  \emph{J. Differ. Geometry}, vol.~36, no.~2, pp. 417--450, 1992.

\bibitem{chen1998stability}
Z.-Q. Chen, Z.~Qian, Y.~Hu, and W.~Zheng, ``Stability and approximations of
  symmetric diffusion semigroups and kernels,'' \emph{J. functional anal.},
  vol. 152, no.~1, pp. 255--280, 1998.

\bibitem{sun2009concise}
J.~Sun, M.~Ovsjanikov, and L.~Guibas, ``A concise and provably informative
  multi-scale signature based on heat diffusion,'' \emph{Computer graphics
  forum}, vol.~28, no.~5, pp. 1383--1392, 2009.

\bibitem{nash1956imbedding}
J.~Nash, ``The imbedding problem for riemannian manifolds,'' \emph{Ann. math.},
  vol.~63, no.~1, pp. 20--63, 1956.

\bibitem{hsu2002stochastic}
E.~P. Hsu, \emph{Stochastic analysis on manifolds}.\hskip 1em plus 0.5em minus
  0.4em\relax Providence: American Mathematical Soc., 2002.

\bibitem{dibenedetto2011harnack}
E.~DiBenedetto, U.~P. Gianazza, and V.~Vespri, \emph{Harnack's inequality for
  degenerate and singular parabolic equations}.\hskip 1em plus 0.5em minus
  0.4em\relax London: Springer Science \& Business Media, 2011.

\bibitem{zheng1999stability}
W.~Zheng, ``Stability of time-dependent diffusion semigroups and kernels,''
  \emph{Acta Math. Sinica}, vol.~15, no.~4, pp. 575--586, 1999.

\bibitem{zhou2005regularization}
D.~Zhou and B.~Sch{\"o}lkopf, ``Regularization on discrete spaces,'' in
  \emph{Joint Pattern Recognition Symposium}.\hskip 1em plus 0.5em minus
  0.4em\relax Springer, 2005, pp. 361--368.

\bibitem{sochenTIP1998}
N.~Sochen, R.~Kimmel, and R.~Malladi, ``A general framework for low level
  vision,'' \emph{{IEEE} Trans. Image Process.}, vol.~7, no.~3, pp. 310--318,
  1998.

\bibitem{velickovic2018graph}
P.~Veli{\v{c}}kovi{\'{c}}, G.~Cucurull, A.~Casanova, A.~Romero, P.~Li{\`{o}},
  and Y.~Bengio, ``Graph attention networks,'' in \emph{Proc. Int. Conf. Learn.
  Representations}, 2018, pp. 1--12.

\bibitem{chen1999linear}
C.-T. Chen and B.~Shafai, \emph{Linear system theory and design}.\hskip 1em
  plus 0.5em minus 0.4em\relax New York: Oxford university press New York,
  1999.

\bibitem{lipschitznorm_icml21}
G.~Dasoulas, K.~Scaman, and A.~Virmaux, ``Lipschitz normalization for
  self-attention layers with application to graph neural networks,'' in
  \emph{Proc. Int. Conf. Mach. Learning}, 2021, pp. 2456--2466.

\bibitem{McCallum2004AutomatingTC}
A.~McCallum, K.~Nigam, J.~D.~M. Rennie, and K.~Seymore, ``Automating the
  construction of internet portals with machine learning,'' \emph{Information
  Retrieval}, vol.~3, pp. 127--163, 2004.

\bibitem{hamilton2017inductive}
W.~L. Hamilton, R.~Ying, and J.~Leskovec, ``Inductive representation learning
  on large graphs,'' in \emph{Proc. Advances Neural Inf. Process. Syst.}, 2017.

\bibitem{xu2018how}
K.~Xu, W.~Hu, J.~Leskovec, and S.~Jegelka, ``How powerful are graph neural
  networks?'' in \emph{Proc. Int. Conf. Learn. Representations}, 2019.

\bibitem{KlicperaBG19}
J.~Klicpera, A.~Bojchevski, and S.~G{\"{u}}nnemann, ``Predict then propagate:
  Graph neural networks meet personalized pagerank,'' in \emph{Proc. Int. Conf.
  Learning Representations}, 2019.

\bibitem{SenNamata2008}
P.~Sen, G.~Namata, M.~Bilgic, L.~Getoor, B.~Galligher, and T.~Eliassi-Rad,
  ``Collective classification in network data,'' \emph{AI Magazine}, vol.~29,
  no.~3, p.~93, Sep. 2008.

\bibitem{namata:mlg12-wkshp}
G.~M. Namata, B.~London, L.~Getoor, and B.~Huang, ``Query-driven active
  surveying for collective classification,'' in \emph{Workshop Mining and
  Learn. with Graphs}, 2012.

\bibitem{zheng2021grb}
Q.~Zheng, X.~Zou, Y.~Dong, Y.~Cen, D.~Yin, J.~Xu, Y.~Yang, and J.~Tang, ``Graph
  robustness benchmark: Benchmarking the adversarial robustness of graph
  machine learning,'' \emph{Neural Information Processing Systems Track on
  Datasets and Benchmarks 2021}, 2021.

\bibitem{bhaskar2022diffusion}
D.~Bhaskar, K.~MacDonald, D.~Thomas, S.~Zhao, K.~You, J.~Paige, Y.~Aizenbud,
  B.~Rieck, I.~M. Adelstein, and S.~Krishnaswamy, ``Diffusion-based methods for
  estimating curvature in data,'' in \emph{ICLR Workshop Geometrical
  Topological Representation Learn.}, 2022.

\bibitem{ToppingICLR2022}
J.~Topping, F.~D. Giovanni, B.~P. Chamberlain, X.~Dong, and M.~M. Bronstein,
  ``Understanding over-squashing and bottlenecks on graphs via curvature,'' in
  \emph{Proc. Int. Conf. Learn. Representations}, 2022.

\bibitem{ZhuNIPS2020}
S.~Zhu, S.~Pan, C.~Zhou, J.~Wu, Y.~Cao, and B.~Wang, ``Graph geometry
  interaction learning,'' in \emph{Proc. Advances Neural Inf. Process. Syst.},
  2020.

\bibitem{McAuleyECCV2012}
J.~McAuley and J.~Leskovec, ``Image labeling on a network: using social-network
  metadata for image classification.'' in \emph{Proc. European Conf. Comput.
  Vision}, 2012.

\bibitem{shchur2018pitfalls}
O.~Shchur, M.~Mumme, A.~Bojchevski, and S.~G{\"u}nnemann, ``Pitfalls of graph
  neural network evaluation,'' \emph{Relational Representation Learning
  Workshop, NeurIPS}, 2018.

\bibitem{McAuleySIGIR2015}
J.~McAuley, C.~Targett, Q.~Shi, and A.~van~den Hengel, ``Image-based
  recommendations on styles and substitutes,'' in \emph{Proc. Int. ACM SIGIR
  Conf. Res. Develop. Inform. Retrieval}, 2015, p. 43–52.

\bibitem{van1977sensitivity}
C.~Van~Loan, ``The sensitivity of the matrix exponential,'' \emph{SIAM Journal
  on Numerical Analysis}, vol.~14, no.~6, pp. 971--981, 1977.

\bibitem{horn2012matrix}
R.~A. Horn and C.~R. Johnson, \emph{Matrix analysis}.\hskip 1em plus 0.5em
  minus 0.4em\relax New York: Cambridge university press, 2012.

\bibitem{hinrichsen1989robustness}
D.~Hinrichsen, A.~Ilchmann, and A.~J. Pritchard, ``Robustness of stability of
  time-varying linear systems,'' \emph{J. Differ. Equ.}, vol.~82, no.~2, pp.
  219--250, 1989.

\end{thebibliography}

\end{document}